\documentclass{article} %
\usepackage{iclr2024_conference,times}

\usepackage{basic_commands}

\usepackage{amsthm}
\usepackage{url}
\usepackage{hyperref}
\usepackage{cleveref}
\usepackage{xcolor}
\usepackage{subcaption}
\usepackage{graphicx}
\usepackage{wrapfig}
\usepackage{enumitem}
\usepackage{booktabs}
\usepackage{makecell}

\usepackage{algorithm}
\usepackage{algorithmic}

\theoremstyle{plain}
\newtheorem{theorem}{Theorem}[section]

\newtheorem{lemma}[theorem]{Lemma}

\theoremstyle{definition}
\newtheorem{definition}[theorem]{Definition}

\theoremstyle{remark}

\definecolor{candypink}{rgb}{0.89, 0.44, 0.48}          %
\definecolor{mediumaquamarine}{rgb}{0.4, 0.8, 0.67}     %
\definecolor{azure}{rgb}{0.0, 0.5, 1.0}                 %
\definecolor{awesome}{rgb}{1.0, 0.13, 0.32}             %
\definecolor{myred}{HTML}{F54254}
\definecolor{myblue}{HTML}{598BE7}
\definecolor{mydarkblue}{HTML}{385492}

\hypersetup{urlcolor=magenta}
\newcommand{\metracode}{{\url{https://github.com/seohongpark/METRA}}}
\newcommand{\metrawebsite}{{\url{https://seohong.me/projects/metra/}}}
\newcommand{\metrapage}{\href{https://seohong.me/projects/metra/}{our project page}\xspace}
\newcommand{\metravideo}{\href{https://seohong.me/projects/metra/}{videos}\xspace}

\newcommand{\cutsectionup}{\vspace{-7pt}}
\newcommand{\cutsectiondown}{\vspace{-7pt}}
\newcommand{\cutsubsectionup}{\vspace{-6pt}}
\newcommand{\cutsubsectiondown}{\vspace{-6pt}}

\title{METRA: Scalable Unsupervised RL \\ with Metric-Aware Abstraction}

\author{Seohong Park$^{1}$ \quad Oleh Rybkin$^{1}$ \quad Sergey Levine$^{1}$ \\
$^{1}$University of California, Berkeley \\
\texttt{seohong@berkeley.edu}
}

\iclrfinalcopy %
\begin{document}

\maketitle

\vspace{-15pt}
\begin{abstract}
\vspace{-5pt}
Unsupervised pre-training strategies have proven to be highly effective in natural language processing and computer vision.
Likewise,
unsupervised reinforcement learning (RL) holds the promise of discovering a variety of potentially useful behaviors
that can accelerate the learning of a wide array of downstream tasks.
Previous unsupervised RL approaches have mainly focused on pure exploration and mutual information skill learning.
However, despite the previous attempts, making unsupervised RL truly scalable still remains a major open challenge:
pure exploration approaches might struggle in complex environments with large state spaces, where covering every possible transition is infeasible,
and mutual information skill learning approaches might completely fail to explore the environment due to the lack of incentives.
To make unsupervised RL scalable to complex, high-dimensional environments,
we propose a novel unsupervised RL objective, which we call \textbf{Metric-Aware Abstraction} (\textbf{METRA}).
Our main idea is, instead of directly covering the entire state space, to only cover a compact latent space $\gZ$
that is \emph{metrically} connected to the state space $\gS$ by temporal distances. %
By learning to move in every direction in the latent space,
METRA obtains a tractable set of diverse behaviors that approximately cover the state space,
being scalable to high-dimensional environments.
Through our experiments in five locomotion and manipulation environments,
we demonstrate that METRA can discover a variety of useful behaviors even in complex, pixel-based environments,
being the \textbf{first} unsupervised RL method that discovers diverse locomotion behaviors in pixel-based Quadruped and Humanoid.
Our code and videos are available at \metrawebsite
\end{abstract}

\vspace{-10pt}
\section{Introduction}
\cutsectiondown
\label{sec:intro}

Unsupervised pre-training has proven transformative in domains from natural language processing to computer vision:
contrastive representation learning~\citep{simclr_chen2020}
can acquire effective features from unlabeled images,
and generative autoregressive pre-training~\citep{gpt3_brown2020} can enable language models that can be adapted to a plethora of downstream applications.
If we could derive an equally scalable framework for unsupervised reinforcement learning (RL) that autonomously explores the space of possible behaviors, then we could enable general-purpose unsupervised pre-trained agents to serve as an effective foundation for efficiently learning a broad range of downstream tasks.
Hence, our goal in this work is to propose a scalable unsupervised RL objective
that encourages an agent to explore its environment and learn a breadth of potentially useful behaviors
without any supervision. %

While this formulation of unsupervised RL has been explored in a number of prior works,
making fully unsupervised RL truly scalable still remains a major open challenge.
Prior approaches to unsupervised RL can be categorized into two main groups:
pure exploration methods~\citep{rnd_burda2019,disag_pathak2019,apt_liu2021,lexa_mendonca2021,murlb_rajeswar2023}
and unsupervised skill discovery methods~\citep{diayn_eysenbach2019,dads_sharma2020,cic_laskin2022,lsd_park2022}.
While these approaches have been shown to be effective in several unsupervised RL benchmarks~\citep{lexa_mendonca2021,urlb_laskin2021},
it is not entirely clear whether such methods can indeed be scalable to complex environments with high intrinsic dimensionality.
Pure exploration-based unsupervised RL approaches
aim to either \emph{completely} cover the entire state space~\citep{rnd_burda2019,apt_liu2021}
or \emph{fully} capture the transition dynamics of
the Markov decision process (MDP)~\citep{disag_pathak2019,p2e_sekar2020,lexa_mendonca2021,murlb_rajeswar2023}.
However, in complex environments with a large state space, it may be infeasible to attain either of these aims.
In fact, we will show that these methods fail to cover the state space even in the state-based $29$-dimensional MuJoCo Ant environment.
On the other hand, unsupervised skill discovery methods aim to discover diverse, distinguishable behaviors,
\eg, by maximizing the mutual information between skills and states~\citep{vic_gregor2016,diayn_eysenbach2019}.
While these methods do learn behaviors that are mutually different,
they either do not necessarily encourage exploration
and thus often have limited state coverage in the complete absence of supervision~\citep{diayn_eysenbach2019,dads_sharma2020},
or are not directly scalable to pixel-based control environments~\citep{lsd_park2022,csd_park2023}.
\begin{figure}[t!]
    \centering
    \vspace{-30pt}
    \includegraphics[width=0.9\linewidth]{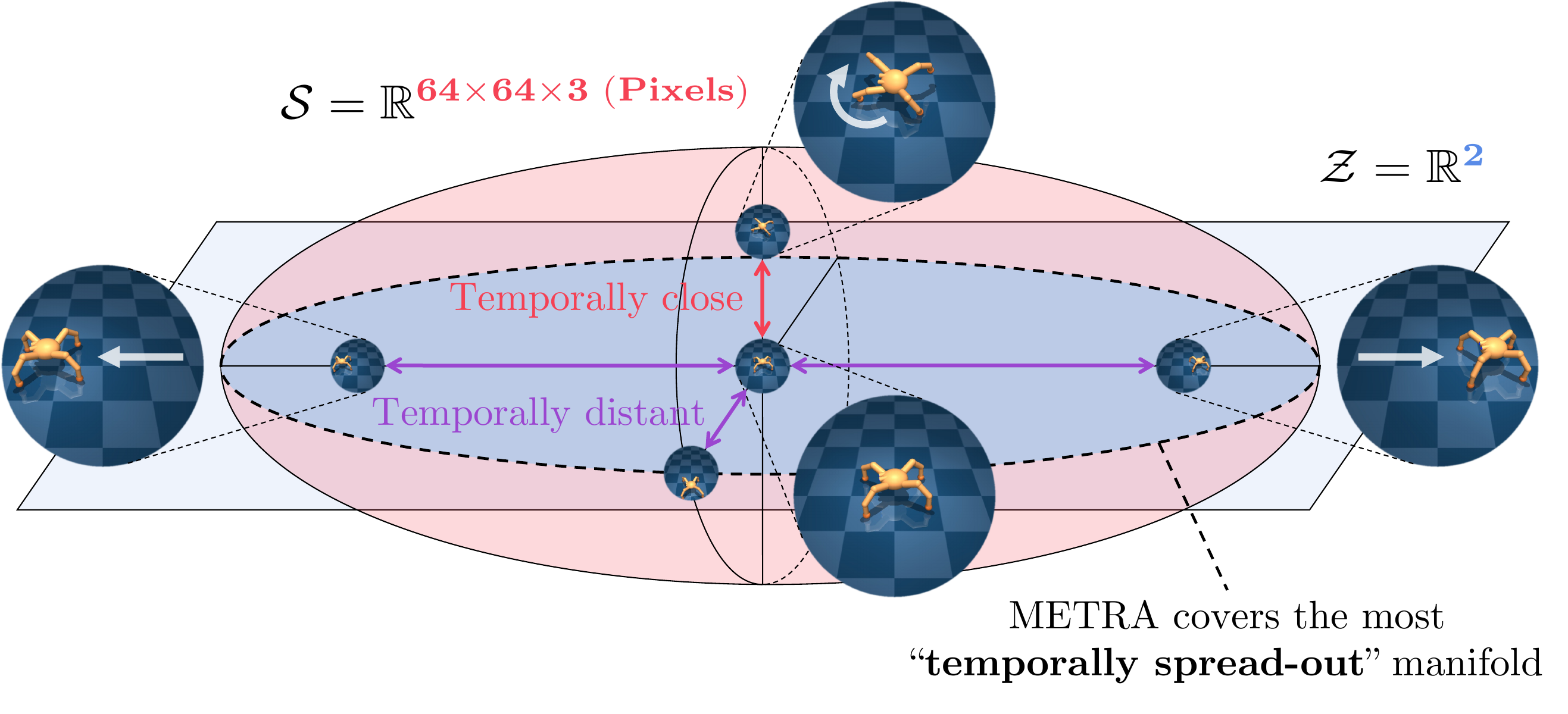}
    \vspace{-10pt}
    \caption{
    \footnotesize 
    \textbf{Illustration of METRA.}
    Our main idea for scalable unsupervised RL is to cover
    only the most ``important'' low-dimensional subset of the state space, analogously to PCA. %
    Specifically, METRA covers the most ``temporally spread-out'' (non-linear) manifold,
    which would lead to \emph{approximate} coverage of the state space $\gS$.
    In the example above, the two-dimensional $\gZ$ space captures behaviors running in all directions,
    not necessarily covering every possible leg pose.
    }
    \label{fig:illust}
    \vspace{-10pt}
\end{figure}

In this work, we aim to address these challenges and develop an unsupervised RL objective,
which we call \textbf{Metric-Aware Abstraction} (\textbf{METRA}),
that scales to complex, image-based environments with high intrinsic dimensionality.
Our first main idea is to learn diverse behaviors that maximally cover
\emph{not} the original state space but
a compact \emph{latent metric space} defined by a mapping function $\phi: \gS \to \gZ$ with a metric $d$.
Here, the latent state is connected by the state space by the metric $d$,
which ensures that covering latent space leads to coverage of the state space. 
Now, the question becomes which metric to use.
Previous metric-based skill learning methods mostly used the Euclidean distance (or its scaled variant)
between two states~\citep{wurl_he2022,lsd_park2022,csd_park2023}.
However, such state-based metrics are not directly applicable to complex, high-dimensional state space (\eg, images).
Our second main idea is therefore to use \emph{temporal distances} %
(\ie, the number of minimum environment steps between two states)
as a metric for the latent space.
Temporal distances are invariant to state representations and thus applicable to pixel-based environments as well.
As a result,
by maximizing coverage in the compact latent space,
we can acquire diverse behaviors that approximately cover the entire state space,
being scalable to high-dimensional, complex environments (\Cref{fig:illust}).

Through our experiments on five state-based and pixel-based continuous control environments,
we demonstrate that our method learns diverse, useful behaviors, as well as a compact latent space
that can be used to solve various downstream tasks in a zero-shot manner, outperforming previous unsupervised RL methods.
To the best of our knowledge, METRA is the \textbf{first} unsupervised RL method
that demonstrates the discovery of diverse locomotion behaviors in \textbf{pixel-based} Quadruped and Humanoid environments.

\cutsectionup
\section{Why Might Previous Unsupervised RL Methods Fail To Scale?}
\cutsectiondown
\label{sec:related}

The goal of unsupervised RL is to acquire useful knowledge, such as policies, world models, or exploratory data,
by interacting with the environment in an unsupervised manner
(\ie, without tasks or reward functions). Typically, this knowledge is then leveraged to solve downstream tasks more efficiently.
Prior work in unsupervised RL can be categorized into two main groups:
pure exploration methods and unsupervised skill discovery methods.
Pure exploration methods aim to cover the entire state space or fully capture the environment dynamics.
They encourage exploration
by maximizing uncertainty~\citep{icm_pathak2017,max_shyam2019,rnd_burda2019,disag_pathak2019,p2e_sekar2020,lbs_mazzaglia2022}
or state entropy~\citep{smm_lee2019,skewfit_pong2020,apt_liu2021,protorl_yarats2021}.
Based on the data collected by the exploration policy,
these methods learn a world model~\citep{murlb_rajeswar2023},
train a goal-conditioned policy~\citep{skewfit_pong2020,mega_pitis2020,lexa_mendonca2021,peg_hu2023},
learn skills via trajectory autoencoders~\citep{edl_campos2020,choreo_mazzaglia2023},
or directly fine-tune the learned exploration policy~\citep{urlb_laskin2021}
to accelerate downstream task learning.
While these pure exploration-based approaches are currently the leading methods
in unsupervised RL benchmarks~\citep{lexa_mendonca2021,urlb_laskin2021,choreo_mazzaglia2023,murlb_rajeswar2023},
their scalability may be limited in complex environments with large state spaces
because it is often computationally infeasible to completely cover every possible state or fully capture the dynamics.
In \Cref{sec:exp}, we empirically demonstrate that these approaches even fail to cover the state space of the state-based $29$-dimensional Ant environment.

Another line of research in unsupervised RL aims to learn diverse behaviors (or \emph{skills}) that are distinguishable from one another,
and our method also falls into this category.
The most common approach to unsupervised skill discovery is to maximize the mutual information (MI)
between states and skills~\citep{vic_gregor2016,diayn_eysenbach2019,dads_sharma2020,visr_hansen2020}:
\begin{align}
I(S; Z) = \KL(p(s, z) \| p(s)p(z)). \label{eq:mi}
\end{align}
By associating different skill latent vectors $z$ with different states $s$, these methods learn diverse skills that are mutually distinct.
However, they share the limitation that
they often end up discovering simple, static behaviors with limited state coverage~\citep{edl_campos2020,lsd_park2022}.
This is because MI is defined by a KL divergence (\Cref{eq:mi}),
which is a \emph{metric-agnostic} quantity (\eg, MI is invariant to scaling; see \Cref{fig:wdm_illust}).
As a result, the MI objective only focuses on the distinguishability of behaviors,
regardless of ``how different'' they are,
resulting in limited state coverage~\citep{edl_campos2020,lsd_park2022}.
To address this limitation, prior works combine the MI objective
with exploration bonuses~\citep{edl_campos2020,disdain_strouse2022,pma_park2023}
or propose different objectives that encourage maximizing distances in the state space~\citep{wurl_he2022,lsd_park2022,csd_park2023}.
Yet, it remains unclear whether these methods can scale to complex, high-dimensional environments,
because they either attempt to completely capture the entire MDP~\citep{edl_campos2020,disdain_strouse2022,pma_park2023}
or assume a compact, structured state space~\citep{wurl_he2022,lsd_park2022,csd_park2023}.
Indeed, to the best of our knowledge, \emph{no} previous unsupervised skill discovery methods have succeeded in discovering locomotion behaviors on pixel-based locomotion environments.
Unlike these approaches, our method learns a compact set of diverse behaviors
that are maximally different in terms of the \emph{temporal distance}.
As a result, they can approximately cover the state space, even in a complex, high-dimensional environment.
We discuss further related work in \Cref{sec:ext_related}.

\begin{figure}[t!]
    \centering
    \vspace{-15pt}
    \includegraphics[width=0.85\linewidth]{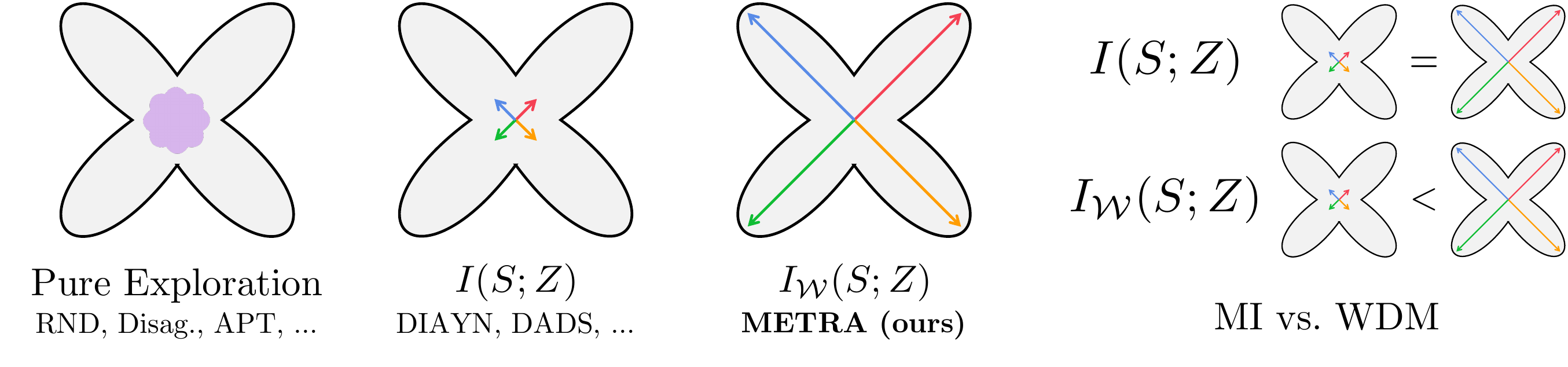}
    \vspace{-5pt}
    \caption{
    \footnotesize 
    \textbf{Sketch comparing different unsupervised RL objectives.}
    Pure exploration approaches try to cover every possible state,
    which is infeasible in complex environments (\eg, such methods might be ``stuck'' at forever finding novel joint angle configurations of a robot, without fully exploring the environment; see \Cref{fig:qual}).
    The mutual information $I(S; Z)$ has no underlying distance metrics, and thus does not prioritize coverage enough, only focusing on skills that are discriminable.
    In contrast, our proposed Wasserstein dependency measure $I_\gW(S; Z)$ maximizes the distance metric $d$, which we choose to be the temporal distance, forcing the learned skills to span the ``longest'' subspaces of the state space, analogously to (temporal, non-linear) PCA.
    }
    \label{fig:wdm_illust}
    \vspace{-10pt}
\end{figure}

\cutsectionup
\section{Preliminaries and Problem Setting}
\label{sec:prelim}
\cutsectiondown

We consider a controlled Markov process, an MDP without a reward function, defined as $\gM = (\gS, \gA, \mu, p)$.
$\gS$ denotes the state space, $\gA$ denotes the action space, $\mu: \Delta(\gS)$ denotes the initial state distribution,
and $p: \gS \times \gA \to \Delta(\gA)$ denotes the transition dynamics kernel.
We consider a set of latent vectors $z \in \gZ$, which can be either discrete or continuous, and a latent-conditioned policy $\pi(a|s, z)$.
Following the terminology in unsupervised skill discovery,
we refer to latent vectors $z$ (and their corresponding policies $\pi(a|s, z)$) as \emph{skills}.
When sampling a trajectory, we first sample a skill from the prior distribution, $z \sim p(z)$,
and then roll out a trajectory with $\pi(a|s, z)$, where $z$ is fixed for the entire episode.
Hence, the joint skill-trajectory distribution is given as $p(\tau, z) = p(z)p(s_0)\prod_{t=0}^{T-1}\pi(a_t|s_t, z)p(s_{t+1}|s_t, a_t)$,
where $\tau$ denotes $(s_0, a_0, s_1, a_1, \dots, s_T)$.
Our goal in this work is to learn a set of diverse, useful behaviors $\pi(a|s, z)$,
without using any supervision, data, or prior knowledge.

\cutsectionup
\section{A Scalable Objective for Unsupervised RL}
\cutsectiondown

\textbf{Desiderata.} We first state our two desiderata for a scalable unsupervised RL objective.
First, instead of covering every possible state in a given MDP, which is infeasible in complex environments,
we want to have a compact latent space $\gZ$ of a \emph{tractable} size
and a latent-conditioned policy $\pi(a|s, z)$ that translates latent vectors into actual behaviors.
Second, we want the behaviors from different latent vectors to be different, collectively covering as much of the state space as possible.
In other words, we want to \textbf{maximize state coverage under the given capacity} of $\gZ$.
An algorithm that satisfies these two desiderata would be scalable to complex environments,
because we only need to learn a compact set of behaviors that approximately cover the MDP. %

\textbf{Objective.} Based on the above, we propose the following novel objective for unsupervised RL:
\begin{align}
    I_\gW(S; Z) = \gW(p(s, z), p(s)p(z)), \label{eq:wdm}
\end{align}
where $I_\gW(S; Z)$ is the Wasserstein dependency measure (WDM)~\citep{wdm_ozair2019} between states and skills,
and $\gW$ is the $1$-Wasserstein distance
on the metric space $(\gS \times \gZ, d)$ with a \emph{distance metric} $d$.
Intuitively, the WDM objective in \Cref{eq:wdm} can be viewed as a ``Wasserstein variant'' of the previous MI objective (\Cref{eq:mi}),
where the KL divergence in MI is replaced with the Wasserstein distance.
However, despite the apparent similarity,
there exists a significant difference between the two objectives:
MI is completely agnostic to the underlying distance metric,
while WDM is a \emph{metric-aware} quantity.
As a result,
the WDM objective (\Cref{eq:wdm}) not only
discovers diverse skills that are different from one another, as in the MI objective,
but also actively maximizes distances $d$ between different skill trajectories (\Cref{fig:wdm_illust}).
This makes them collectively cover the state space as much as possible (in terms of the given metric $d$).
The choice of metric for $d$ is critical for effective skill discovery,
and simple choices like Euclidean metrics on the state space would generally \emph{not} be effective for non-metric state representations, such as images. Therefore, instantiating this approach with the right metric is an important part of our contribution, as we will discuss in \Cref{sec:full_obj}.
Until then, we assume that we have a given metric $d$.
\cutsubsectionup
\subsection{Tractable Optimization}
\cutsubsectiondown
\label{sec:tractable_opt}

While our objective $I_\gW(S; Z)$ has several desirable properties,
it is not immediately straightforward to maximize this quantity in practice.
In this section, we describe a simple, tractable objective that can be used to maximize $I_\gW(S; Z)$ in practice.
We begin with the Kantorovich-Rubenstein duality~\citep{ot_villani2009,wdm_ozair2019},
which provides a tractable way to maximize the Wasserstein dependency measure:
\begin{align}
    I_\gW(S; Z) = \sup_{\|f\|_L \leq 1} \E_{p(s, z)} [f(s, z)] - \E_{p(s)p(z)} [f(s, z)], \label{eq:wdm_dual}
\end{align}
where $\|f\|_L$ denotes the Lipschitz constant for the function $f: \gS \times \gZ \to \sR$ under the given distance metric $d$,
\ie, $\|f\|_L = \sup_{(s_1, z_1) \neq (s_2, z_2)} |f(s_1, z_1) - f(s_2, z_2)| / d((s_1, z_1), (s_2, z_2))$.
Intuitively, $f$ is a score function that assigns larger values to $(s, z)$ tuples sampled from the joint distribution
and smaller values to $(s, z)$ tuples sampled independently from their marginal distributions.
We note that \Cref{eq:wdm_dual} is already a tractable objective,
as we can jointly train a $1$-Lipschitz-constrained score function $f(s, z)$ using gradient descent
and a skill policy $\pi(a|s, z)$ using RL, with the reward function being an empirical estimate of \Cref{eq:wdm_dual},
$r(s, z) = f(s, z) - N^{-1} \sum_{i=1}^N f(s, z_i)$, where $z_1, z_2, \dots, z_N$ are $N$ independent random samples from the prior distribution $p(z)$. 

However, since sampling $N$ additional $z$s for each data point is computationally demanding,
we will further simplify the objective to enable more efficient learning.
First, we consider the parameterization $f(s, z) = \phi(s)^\top \psi(z)$
with $\phi: \gS \to \sR^D$ and $\psi: \gZ \to \sR^D$ with independent $1$-Lipschitz constraints%
\footnote{While $\|\phi\|_L \leq 1, \|\psi\|_L \leq 1$ is not technically equivalent to $\|f\|_L \leq 1$,
we use the former as it is more tractable.
Also, we note that $\|f\|_L$ can be upper-bounded in terms of $\|\phi\|_L$, $\|\psi\|_L$, $\sup_s \|\phi(s)\|_2$, and $\sup_z \|\psi(z)\|_2$
under $d((s_1, z_1), (s_2, z_2)) = (\sup_s \|\phi(s)\|_2) \|\psi\|_L d(z_1, z_2) + (\sup_z \|\psi(z)\|_2) \|\phi\|_L d(s_1, s_2)$.},
which yields the following objective:
\begin{align}
    I_\gW(S; Z) \approx \sup_{\|\phi\|_L \leq 1, \|\psi\|_L \leq 1} \E_{p(s, z)} [\phi(s)^\top \psi(z)] - \E_{p(s)} [\phi(s)]^\top \E_{p(z)} [\psi(z)]. \label{eq:wdm_decomp}
\end{align}
Here, we note that the decomposition $f(s, z) = \phi(s)^\top \psi(z)$ is \emph{universal};
\ie, the expressiveness of $f(s, z)$ is equivalent to that of $\phi(s)^\top \psi(z)$ when $D \to \infty$.
The proof can be found in \Cref{sec:theory}. %

Next, we consider a variant of the Wasserstein dependency measure that only depends on the last state: $I_\gW(S_T; Z)$,
similarly to VIC~\citep{vic_gregor2016}.
This allows us to further decompose the objective with a telescoping sum as follows:
\begin{align}
    &I_\gW(S_T; Z) \approx \sup_{\|\phi\|_L \leq 1, \|\psi\|_L \leq 1} \E_{p(\tau, z)}[\phi(s_T)^\top \psi(z)] - \E_{p(\tau)}[\phi(s_T)]^\top \E_{p(z)}[\psi(z)] \\
    &= \sup_{\phi, \psi} \sum_{t=0}^{T-1} \left(
        \E_{p(\tau, z)}[(\phi(s_{t+1}) - \phi(s_t))^\top \psi(z)] - \E_{p(\tau)}[\phi(s_{t+1}) - \phi(s_t)]^\top \E_{p(z)}[\psi(z)] \right), \label{eq:wdm_tele}
\end{align}
where we also use the fact that $p(s_0)$ and $p(z)$ are independent.
Finally, we set $\psi(z)$ to $z$. While this makes $\psi$ less expressive, it allows us to derive the following concise objective:
\begin{align}
    I_\gW(S_T; Z) \approx \sup_{\|\phi\|_L \leq 1} \E_{p(\tau, z)} \left[ \sum_{t=0}^{T-1} (\phi(s_{t+1}) - \phi(s_t))^\top (z - \bar z) \right], \label{eq:wdm_simple}
\end{align}
where $\bar z = \E_{p(z)}[z]$.
Here, since we can always shift the prior distribution $p(z)$ to have a zero mean,
we can assume $\bar z = 0$ without loss of generality.
This objective can now be easily maximized by jointly training $\phi(s)$ and $\pi(a|s, z)$
with $r(s, z, s') = (\phi(s') - \phi(s))^\top z$ under the constraint $\|\phi\|_L \leq 1$.
Note that we do not need any additional random samples of $z$, unlike \Cref{eq:wdm_dual}.
\cutsubsectionup
\subsection{Full Objective: Metric-Aware Abstraction (METRA)}
\cutsubsectiondown
\label{sec:full_obj}

So far, we have not specified
the distance metric $d$ for the Wasserstein distance in WDM (or equivalently for the Lipschitz constraint $\|\phi\|_L \leq 1$).
Choosing an appropriate distance metric is crucial for learning a compact set of useful behaviors,
because it determines the \emph{priority} by which the behaviors are learned within the capacity of $\gZ$.
Previous metric-based skill discovery methods mostly employed
the Euclidean distance (or its scaled variant) as a metric~\citep{wurl_he2022,lsd_park2022,csd_park2023}.
However, they are not directly scalable to high-dimensional environments with pixel-based observations,
in which the Euclidean distance is not necessarily meaningful.

In this work, we propose to use the \emph{temporal distance}~\citep{gcrl_kaelbling1993,ddl_hartikainen2020,aim_durugkar2021}
between two states as a distance metric $d_\mathrm{temp}(s_1, s_2)$,
the minimum number of environment steps to reach $s_2$ from $s_1$.
This provides a natural way to measure the distance between two states,
as it only depends on the \emph{inherent} transition dynamics of the MDP,
being invariant to the state representation and thus scalable to pixel-based environments.
Using the temporal distance metric, we can rewrite \Cref{eq:wdm_simple} as follows:
\begin{align}
    \sup_{\pi, \phi} \ \ \E_{p(\tau, z)} \left[ \sum_{t=0}^{T-1} (\phi(s_{t+1}) - \phi(s_t))^\top z \right] \ \ 
    \mathrm{s.t.} \ \ \|\phi(s) - \phi(s')\|_2 \leq 1, \ \ \forall (s, s') \in \gS_{\mathrm{adj}}, \label{eq:metra} 
\end{align}
where $\gS_{\mathrm{adj}}$ denotes the set of adjacent state pairs in the MDP.
Note that $\|\phi\|_L \leq 1$ is equivalently converted into $\|\phi(s) - \phi(s')\|_2 \leq 1$ under the temporal distance metric
(see \Cref{thm:lip_temp}).

\textbf{Intuition and interpretation.}
We next describe how the constrained objective in \Cref{eq:metra} may be interpreted.
Intuitively, a policy $\pi(a|s,z)$ that maximizes our objective should learn to move as far as possible along various directions in the latent space (specified by $z$).
Since distances in the latent space, $\|\phi(s_1) - \phi(s_2)\|_2$, are always upper-bounded by the corresponding temporal distances in the MDP, given by $d_{\mathrm{temp}}(s_1, s_2)$,
the learned latent space should assign its (limited) dimensions to the manifolds in the original state space that are maximally ``spread out'', in the sense that shortest paths within the set of represented states should be as long as possible.
This conceptually resembles ``principal components'' of the state space,
but with respect to shortest paths rather than Euclidean distances,
and with non-linear $\phi$ rather than linear $\phi$.
Thus, we would expect $\phi$ to learn to abstract the state space in a lossy manner,
preserving temporal distances (\Cref{fig:intuition}),
and emphasizing those degrees of freedom of the state that span the largest possible ``temporal'' (non-linear) manifolds (\Cref{fig:illust}).
Based on this intuition, we call our method \textbf{Metric-Aware Abstraction} (\textbf{METRA}).
In \Cref{sec:pca}, we derive a formal connection between METRA and principal component analysis (PCA)
under the temporal distance metric under several simplifying assumptions.

\begin{theorem}[Informal statement of \Cref{thm:metra_pca}]
Under some simplifying assumptions, linear squared METRA is equivalent to PCA under the temporal distance metric.
\end{theorem}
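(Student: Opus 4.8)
The plan is to turn the slogan ``linear squared METRA $=$ PCA on the temporal-distance geometry'' into a precise chain of reductions, and I would carry it out in four steps. \emph{First}, I would fix a linear representation $\phi(s) = W s$ (equivalently, optimize directly over the embedding vectors $y_s := \phi(s)$) and add the ``rich MDP'' assumption that, for every latent direction $z$, the optimal skill policy $\pi(a\mid s,z)$ from \Cref{eq:metra} can actually reach the state that maximizes $\phi(\cdot)^\top z$ among states within temporal distance $T$ of the start. Using the telescoping identity $\sum_{t=0}^{T-1}(\phi(s_{t+1})-\phi(s_t))^\top z = (\phi(s_T)-\phi(s_0))^\top z$, the inner $\sup_\pi$ then collapses to $\max_s (\phi(s)-\phi(s_0))^\top z$ over reachable $s$, so \Cref{eq:metra} becomes a purely static optimization over $\phi$ subject to $\|\phi(s)-\phi(s')\|_2 \le d_{\mathrm{temp}}(s,s')$ --- which, by \Cref{thm:lip_temp}, is exactly the adjacency constraint already written in \Cref{eq:metra}.

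\emph{Second}, I would introduce the ``squared'' variant. The raw METRA reward is linear in the latent displacement while PCA is a second-moment object, so this is where ``squared'' does its job: replace the single direction $z$ by an orthonormal frame of directions (equivalently, integrate the squared projected displacement over $z$ on the unit sphere), which, after centering $p(z)$ and the induced state distribution, turns the objective into $\E\big[\|\phi(s_T)-\phi(s_0)\|_2^2\big]$, i.e.\ the trace of the covariance of $\{\phi(s)\}$ under the stationary distribution of the optimal skills --- the ``total temporal variance captured in $\gZ$'' reading hinted at in the intuition paragraph. \emph{Third}, I would show the Lipschitz constraint binds at the optimum on a distance-determining set of state pairs: any $\phi$ that leaves slack there can be perturbed or rescaled to strictly increase the squared objective, so an optimal $\phi$ is, up to translation and rotation, a best rank-$D$ Euclidean embedding of the finite metric space $(\gS, d_{\mathrm{temp}})$.

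\emph{Fourth}, I would invoke the classical equivalence: the rank-$D$ Euclidean configuration minimizing strain with respect to a distance matrix $\Delta$ is obtained by double-centering $-\frac{1}{2}$ times the entrywise square of $\Delta$ and keeping its top $D$ eigenvectors --- i.e.\ classical multidimensional scaling, which coincides with PCA applied to the (implicitly) temporal-distance-embedded states. Chaining Steps~1--4 gives the claimed equivalence between linear squared METRA and PCA under the temporal distance metric, matching \Cref{thm:metra_pca}.

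The delicate part is Steps~1--2, not the linear algebra. Stating the reachability / ``rich MDP'' hypotheses under which $\sup_\pi$ genuinely reduces to a static $\max$ over $\phi$, and justifying the passage from the first-moment METRA reward to the second-moment ``squared'' objective without moving the optimizer, is exactly where the ``simplifying assumptions'' in \Cref{thm:metra_pca} are spent; Step~3 is a routine complementary-slackness/perturbation argument and Step~4 is textbook, so the modeling reductions carry all the weight.
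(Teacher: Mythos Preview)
Your proposal has a genuine gap in Steps~2--3, and the meaning you assign to ``linear'' differs from the paper's in a way that matters.

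After your Step~1 collapses $\sup_\pi$ to a pointwise maximum, the optimal terminal state $s_T(z)=\arg\max_s \phi(s)^\top z$ \emph{depends on $z$}. Hence integrating the squared projection over $z$ does \emph{not} yield $\E\big[\|\phi(s_T)-\phi(s_0)\|_2^2\big]$ or the trace of any sample covariance of $\{\phi(s)\}$: that identity needs the projected vector to be fixed as $z$ varies. What you actually get is $\E_z\big[(\sup_s \phi(s)^\top z)^2\big]$, the expected squared \emph{support function} of $\phi(\gS)$, which is a different functional. Concretely, if $\phi(\gS)$ is the ellipse $\{x:x^\top A^{-1}x\le 1\}$ and $z\sim\gN(0,I)$, the squared support function has expectation $\mathrm{tr}(A)$, whereas your ``covariance of skill endpoints'' gives $\E_z\big[z^\top A^2 z/(z^\top A z)\big]$, which differs from $\mathrm{tr}(A)$ whenever $A$ is not scalar. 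Because Step~2 fails, Step~3's conclusion that the optimizer is the strain-minimizing classical-MDS embedding of $(\gS,d_{\mathrm{temp}})$ is unsupported: maximizing an expected squared support function under a Lipschitz inequality and minimizing MDS strain are different optimization problems, and your rescaling argument only shows \emph{some} constraint is active, not that $\phi$ attains equality on a distance-determining set.

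The paper's route is structurally different. It (i) assumes the MDP admits a \emph{temporally consistent embedding} $\psi$ with $d_{\mathrm{temp}}(u,v)=\|\psi(u)-\psi(v)\|_2$ and takes ``linear'' to mean $\phi(s)=W^\top\psi(s)$; this converts the temporal Lipschitz constraint into the clean spectral condition $\|W\|_{\mathrm{op}}\le 1$ (your $\phi(s)=Ws$ linear in the raw state does not, and ``optimize directly over $y_s$'' is not linear at all). It then (ii) assumes $\psi(\gS)$ is an ellipse with shape matrix $A$, so that the expected squared support function becomes exactly $\E_z[\|A^{1/2}Wz\|_2^2]=\mathrm{tr}(W^\top A W)$ via the dual norm, after which Ky Fan's maximum principle identifies the optimal $W$ with the top-$d$ eigenvectors of $A$. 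The ellipse assumption is precisely what makes the support-function expectation tractable and is doing the work your MDS detour tries (unsuccessfully) to do.
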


\begin{algorithm}[t!]
    \small
    \algsetup{linenosize=\small}
    \caption{Metric-Aware Abstraction (METRA)}
    \begin{algorithmic}[1]
        \STATE Initialize skill policy $\pi(a|s, z)$, representation function $\phi(s)$, Lagrange multiplier $\lambda$, replay buffer $\gD$
        \FOR{$i \gets 1$ to (\# epochs)}
            \FOR{$j \gets 1$ to (\# episodes per epoch)}
                \STATE Sample skill $z \sim p(z)$
                \STATE Sample trajectory $\tau$ with $\pi(a|s, z)$ and add to replay buffer $\gD$
            \ENDFOR
            \STATE Update $\phi(s)$ to maximize $\E_{(s, z, s') \sim \gD}[(\phi(s') - \phi(s))^\top z + \lambda \cdot \min(\eps, 1 - \|\phi(s) - \phi(s')\|_2^2)]$
            \STATE Update $\lambda$ to minimize $\E_{(s, z, s') \sim \gD}[\lambda \cdot \min(\eps, 1 - \|\phi(s) - \phi(s')\|_2^2)]$
            \STATE Update $\pi(a|s, z)$ using SAC~\citep{sac_haarnoja2018} with reward $r(s, z, s') = (\phi(s') - \phi(s))^\top z$
        \ENDFOR
    \end{algorithmic}
    \label{alg:metra}
\end{algorithm}

\textbf{Connections to previous skill discovery methods.}
There exist several intriguing connections between our WDM objective (\Cref{eq:wdm}) and previous skill discovery methods,
including DIAYN~\citep{diayn_eysenbach2019}, DADS~\citep{dads_sharma2020}, CIC~\citep{cic_laskin2022}, LSD~\citep{lsd_park2022}, and CSD~\citep{csd_park2023}.
Perhaps the most apparent connections are with LSD and CSD, which also use similar constrained objectives to \Cref{eq:wdm_simple}.
In fact, although not shown by the original authors, the constrained inner product objectives of LSD and CSD
are also equivalent to $I_\gW(S_T; Z)$, but with the Euclidean distance (or its normalized variant),
instead of the temporal distance.
Also, the connection between WDM and \Cref{eq:wdm_simple} provides further theoretical insight into
the rather ``ad-hoc'' choice of zero-centered one-hot vectors used in discrete LSD~\citep{lsd_park2022};
we must use a zero-mean prior distribution due to the $z - \bar z$ term in \Cref{eq:wdm_simple}.
There exist several connections between our WDM objective and previous MI-based skill discovery methods as well.
Specifically, by simplifying WDM (\Cref{eq:wdm}) in three different ways, we can obtain ``Wasserstein variants'' of DIAYN, DADS, and CIC.
We refer to \Cref{sec:wdm_mi} for detailed derivations.

\textbf{Zero-shot goal-reaching with METRA.}
Thanks to the state abstraction function $\phi(s)$,
METRA provides a simple way to command the skill policy to reach a goal state in a \emph{zero-shot} manner,
as in LSD~\citep{lsd_park2022}.
Since $\phi$ abstracts the state space preserving temporal distances,
the difference vector $\phi(g) - \phi(s)$ tells us the skill we need to select to reach the goal state $g$ from the current state $s$.
As such, by simply setting $z = (\phi(g) - \phi(s)) / \|\phi(g) - \phi(s)\|_2$ (for continuous skills)
or $z = \argmax_{\mathrm{dim}} \ (\phi(g) - \phi(s))$ (for discrete skills),
we can find the skill that leads to the goal.
With this technique, METRA can solve goal-conditioned tasks without learning a separate goal-conditioned policy,
as we will show in \Cref{sec:quant}.

\textbf{Implementation.}
We optimize the constrained objective in \Cref{eq:metra}
using dual gradient descent with a Lagrange multiplier $\lambda$ and a small relaxation constant $\eps > 0$,
similarly to \citet{csd_park2023,quasi_wang2023}.
We provide a pseudocode for METRA in \Cref{alg:metra}.

\textbf{Limitations.}
One potential issue with \Cref{eq:metra} is that we embed the temporal distance into the symmetric Euclidean distance in the latent space,
where the temporal distance can be asymmetric. %
This makes our temporal distance abstraction more ``conservative'' in the sense that it considers the minimum of both temporal distances,
\ie, $\|\phi(s_1) - \phi(s_2)\|_2 \leq \min(d_{\mathrm{temp}}(s_1, s_2), d_{\mathrm{temp}}(s_2, s_1))$.
While this conservatism is less problematic in our benchmark environments, in which transitions are mostly ``symmetric'',
it might be overly restrictive in highly asymmetric environments.
To resolve this, we can replace the Euclidean distance $\|\phi(s_1) - \phi(s_2)\|_2$ in \Cref{eq:metra} with an asymmetric \emph{quasimetric}, as in \citet{quasi_wang2023}.
We leave this extension for future work.
Another limitation is that the simplified WDM objective in \Cref{eq:wdm_simple} only considers behaviors that move linearly in the latent space.
While this does not necessarily imply that the behaviors are also linear in the original state space (because $\phi: \gS \to \gZ$ is a nonlinear mapping),
this simplification, which stems from the fact that we set $\psi(z) = z$, might restrict the diversity of behaviors to some degree.
We believe this can be addressed by using the full WDM objective in \Cref{eq:wdm_decomp}.
Notably, the full objective (\Cref{eq:wdm_decomp}) resembles contrastive learning,
and we believe combining it with scalable contrastive learning techniques is an exciting future research direction (see \Cref{sec:wcic}).

\cutsectionup
\section{Experiments}
\cutsectiondown
\label{sec:exp}

Through our experiments in benchmark environments, we aim to answer the following questions:
(1) Can METRA scale to complex, high-dimensional environments, including domains with image observations?
(2) Does METRA discover meaningful behaviors in complex environments with no supervision?
(3) Are the behaviors discovered by METRA useful for downstream tasks?

\begin{figure}[t!]
    \centering
    \vspace{-10pt}
    \includegraphics[width=\linewidth]{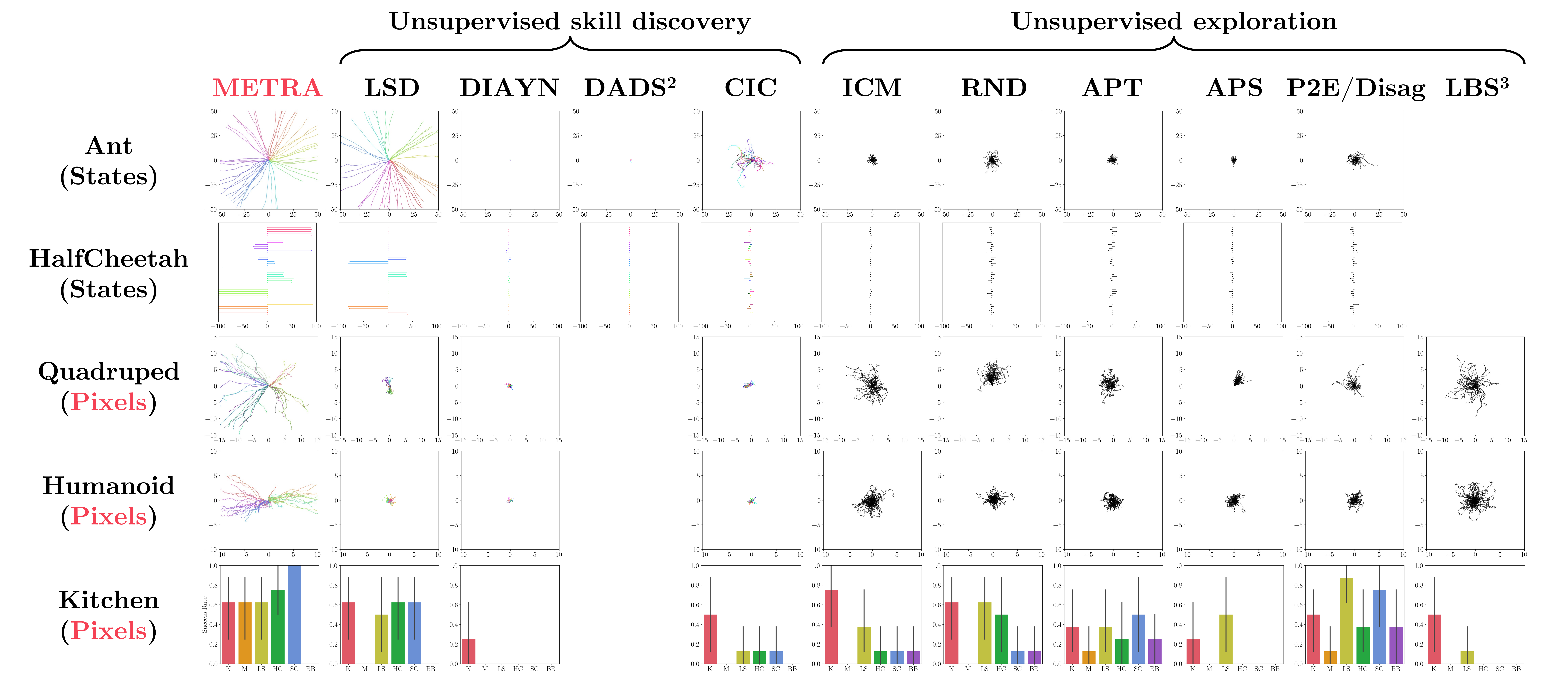}
    \vspace{-20pt}
    \caption{
    \footnotesize 
    \textbf{Examples of behaviors learned by 11 unsupervised RL methods.}
    For locomotion environments, we plot the $x$-$y$ (or $x$) trajectories sampled from learned policies.
    For Kitchen, we measure the coincidental success rates for six predefined tasks.
    Different colors represent different skills $z$.
    METRA is the \emph{only} method that discovers diverse locomotion skills in pixel-based Quadruped and Humanoid.
    We refer to \Cref{fig:qual_all} for the complete qualitative results ($8$ seeds) of METRA
    and \metrapage for videos.
    }
    \vspace{-10pt}
    \label{fig:qual}
\end{figure}

\cutsubsectionup
\subsection{Experimental Setup}
\cutsubsectiondown
\begin{wrapfigure}{r}{0.48\textwidth}
    \centering
    \vspace{-5pt}
    \raisebox{0pt}[\dimexpr\height-1.0\baselineskip\relax]{
        \begin{subfigure}[t]{1.0\linewidth}
        \includegraphics[width=\linewidth]{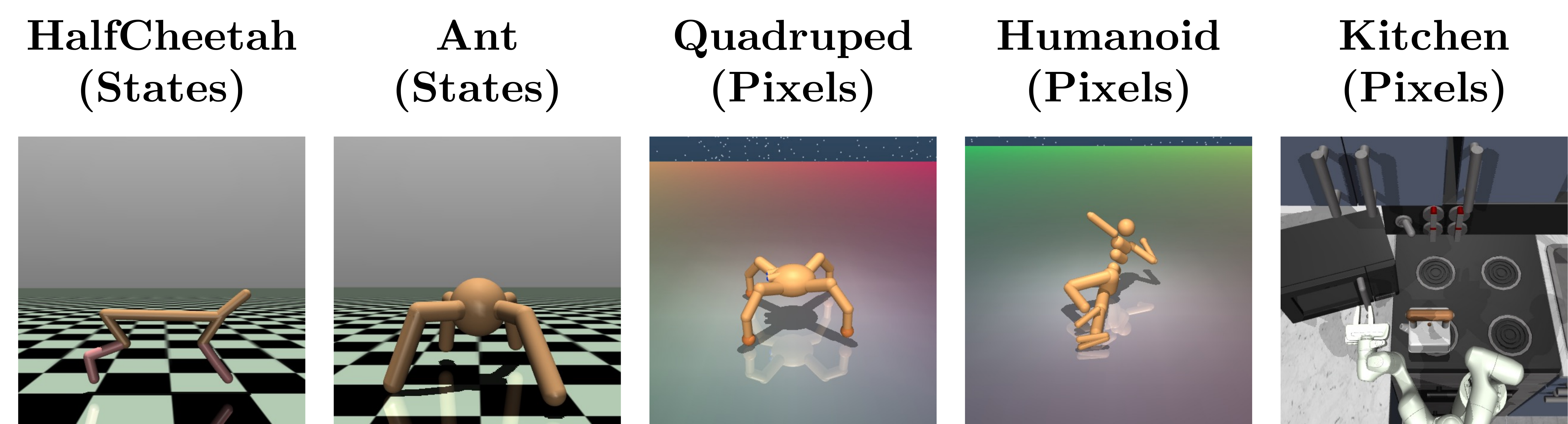}
        \end{subfigure}
    }
    \vspace{-15pt}
    \caption{
    \footnotesize
    \textbf{Benchmark environments.}
    }
    \vspace{-10pt}
    \label{fig:env}
\end{wrapfigure}
We evaluate our method on five robotic locomotion and manipulation environments (\Cref{fig:env}):
state-based Ant and HalfCheetah from Gym~\citep{mujoco_todorov2012,openaigym_brockman2016},
pixel-based Quadruped and Humanoid from the DeepMind Control (DMC) Suite~\citep{dmc_tassa2018},
and a pixel-based version of Kitchen from \citet{ril_gupta2019,lexa_mendonca2021}.
For pixel-based DMC locomotion environments,
we use colored floors to allow the agent to infer its location from pixels, similarly to \citet{director_hafner2022,hiql_park2023} (\Cref{fig:dqd_dhu}).
Throughout the experiments, we do \emph{not} use any prior knowledge, data, or supervision (\eg, observation restriction, early termination, etc.).
As such, in pixel-based environments, the agent must learn diverse behaviors solely from $64 \times 64 \times 3$ camera images.

We compare METRA against 11 previous methods in three groups:
(1) unsupervised skill discovery, (2) unsupervised exploration, and (3) unsupervised goal-reaching methods.
For unsupervised skill discovery methods, we compare against two MI-based approaches,
DIAYN~\citep{diayn_eysenbach2019} and DADS~\citep{dads_sharma2020},
one hybrid method that combines MI and an exploration bonus,
CIC~\citep{cic_laskin2022},
and one metric-based approach that maximizes Euclidean distances,
LSD~\citep{lsd_park2022}.
For unsupervised exploration methods, we consider five pure exploration approaches,
ICM~\citep{icm_pathak2017},
RND~\citep{rnd_burda2019},
Plan2Explore~\citep{p2e_sekar2020} (or Disagreement~\citep{disag_pathak2019}),
APT~\citep{apt_liu2021},
and LBS~\citep{lbs_mazzaglia2022},
and one hybrid approach that combines exploration and successor features,
APS~\citep{aps_liu2021}.
We note that the Dreamer~\citep{dreamer_hafner2020} variants of these methods (especially LBS~\citep{lbs_mazzaglia2022})
are currently the state-of-the-art methods in the pixel-based unsupervised RL benchmark~\citep{urlb_laskin2021,murlb_rajeswar2023}.
For unsupervised goal-reaching methods, we mainly compare with a state-of-the-art unsupervised RL approach,
LEXA~\citep{lexa_mendonca2021},
as well as two previous skill discovery methods that enable zero-shot goal-reaching,
DIAYN and LSD.
We use $2$-D skills for Ant and Humanoid, $4$-D skills for Quadruped, $16$ discrete skills for HalfCheetah,
and $24$ discrete skills for Kitchen.
For CIC, we use $64$-D skill latent vectors for all environments, following the original suggestion~\citep{cic_laskin2022}.

\cutsubsectionup
\subsection{Qualitative Comparison}
\cutsubsectiondown
\label{sec:qual}

\begin{figure}[t!]
    \centering
    \vspace{-10pt}
    \includegraphics[width=0.9\linewidth]{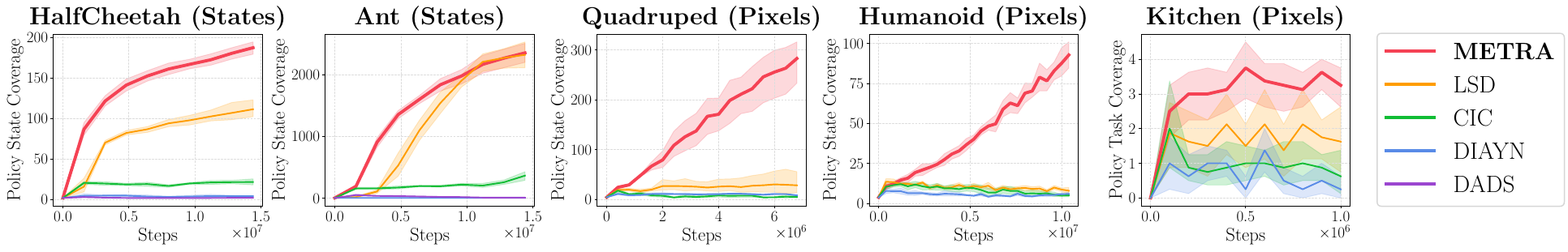}
    \vspace{-10pt}
    \caption{
    \footnotesize 
    \textbf{Quantitative comparison with unsupervised skill discovery methods ($\mathbf{8}$ seeds).}
    We measure the state/task coverage of the policies learned by five skill discovery methods.
    METRA exhibits the best coverage across all environments,
    while previous methods completely fail to explore the state spaces of pixel-based locomotion environments.
    Notably, METRA is the \emph{only} method that discovers locomotion skills in pixel-based Quadruped and Humanoid.
    }
    \vspace{-10pt}
    \label{fig:main_skill}
\end{figure}
\begin{figure}[t!]
    \centering
    \includegraphics[width=0.9\linewidth]{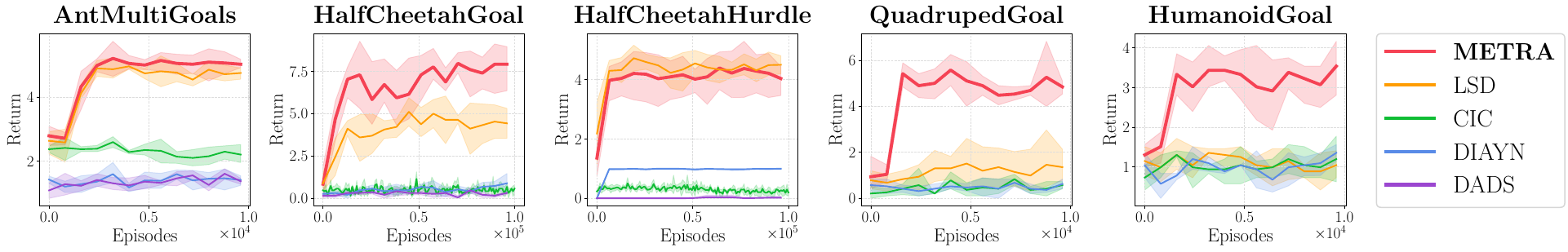}
    \vspace{-10pt}
    \caption{
    \footnotesize 
    \textbf{Downstream task performance comparison of unsupervised skill discovery methods ($\mathbf{4}$ seeds).}
    To verify whether learned skills are useful for downstream tasks,
    we train a hierarchical high-level controller on top of the frozen skill policy
    to maximize task rewards. %
    METRA exhibits the best or near-best performance across the five tasks,
    which suggests that the behaviors learned by METRA are indeed useful for the tasks. %
    }
    \vspace{-15pt}
    \label{fig:down}
\end{figure}

\begin{wrapfigure}{r}{0.36\textwidth}
    \centering
    \vspace{-5pt}
    \raisebox{0pt}[\dimexpr\height-1.0\baselineskip\relax]{
        \begin{subfigure}[t]{1.0\linewidth}
        \includegraphics[width=\linewidth]{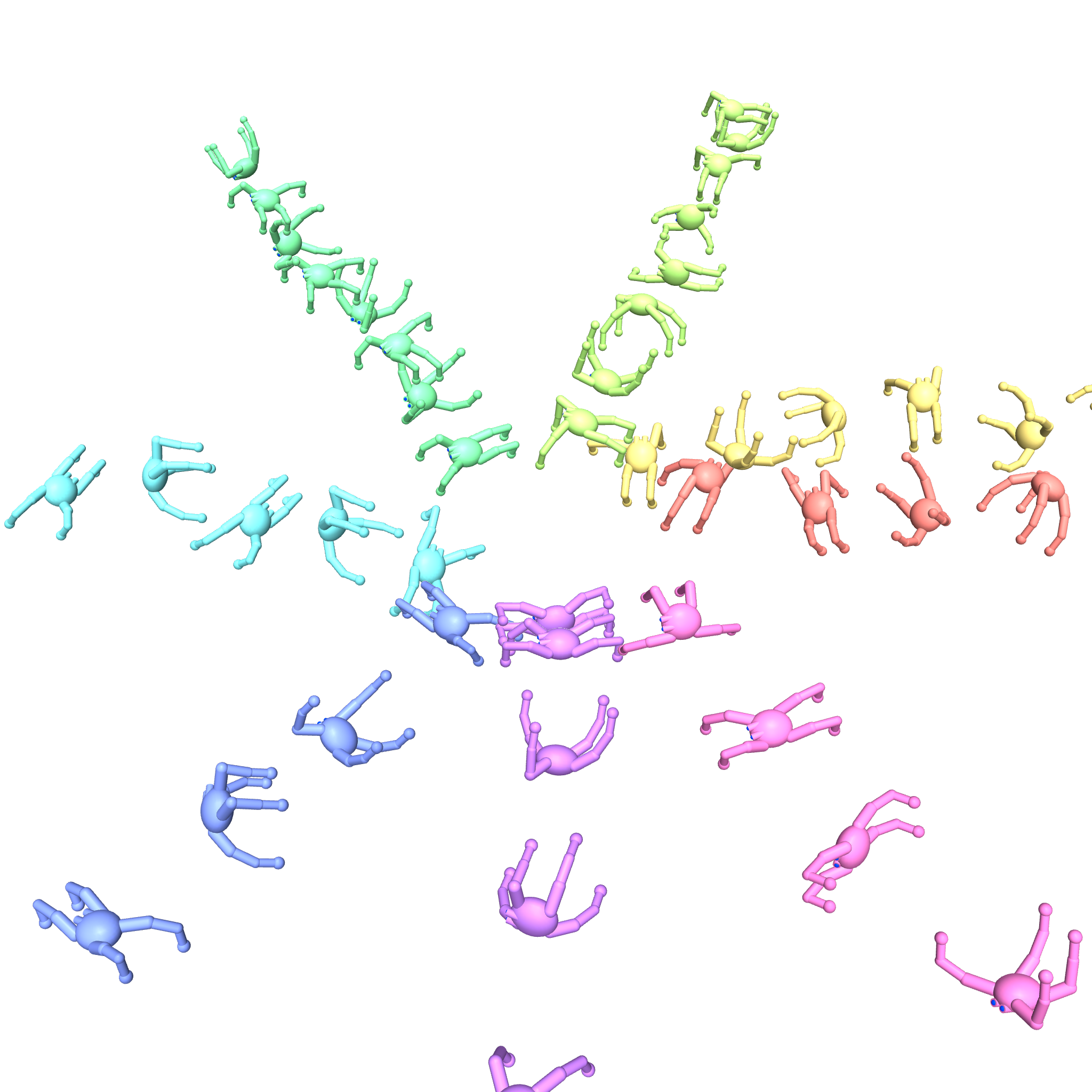}
        \end{subfigure}
    }
    \vspace{-15pt}
    \caption{
    \footnotesize
    \textbf{Example skills.}
    METRA learns diverse locomotion behaviors on pixel-based Quadruped (\metravideo).
    }
    \vspace{-10pt}
    \label{fig:quad_skills}
\end{wrapfigure}
We first demonstrate examples of behaviors (or skills) learned by our method and the 10 prior unsupervised RL methods on each of the five benchmark environments in \Cref{fig:qual}.
The figure illustrates that METRA discovers diverse behaviors in both state-based and pixel-based domains.
Notably, METRA is the only method that successfully discovers locomotion skills in \emph{pixel-based} Quadruped and Humanoid (\Cref{fig:quad_skills}),
and shows qualitatively very different behaviors from previous unsupervised RL methods across the environments.
Pure exploration methods mostly exhibit chaotic, random behaviors (\metravideo),
and fail to fully explore the state space (in terms of $x$-$y$ coordinates) even in state-based Ant and HalfCheetah.
This is because it is practically infeasible to completely cover the infinitely many combinations of joint angles and positions in these domains.
MI-based skill discovery methods also fail to explore large portions of the state space due to the metric-agnosticity of the KL divergence (\Cref{sec:related}),
even when combined with an exploration bonus (\ie, CIC).
LSD, a previous metric-based skill discovery method that maximizes Euclidean distances,
does discover locomotion skills in state-based environments,
but fails to scale to the pixel-based environments, where the Euclidean distance on image pixels does not necessarily provide a meaningful metric.
In contrast to these methods, METRA learns various task-related behaviors by maximizing temporal distances in diverse ways. %
On \metrapage, we show additional qualitative results of METRA with different skill spaces.
We note that, when combined with a discrete latent space, METRA discovers even more diverse behaviors,
such as doing a backflip and taking a static posture, in addition to locomotion skills.
We refer to \Cref{sec:add_results} for visualization of learned latent spaces of METRA.

\cutsubsectionup
\subsection{Quantitative Comparison}
\cutsubsectiondown
\label{sec:quant}

Next, we quantitatively compare METRA against three groups of 11 previous unsupervised RL approaches,
using different metrics that are tailored to each group's primary focus.
For quantitative results, we use $8$ seeds and report $95\%$ confidence intervals, unless otherwise stated.

\textbf{Comparison with unsupervised skill discovery methods.}
We first compare METRA with other methods that also aim to solve the skill discovery problem (i.e., learning a latent-conditioned policy $\pi(a|s,z)$ that performs different skills for different values of $z$). These include LSD, CIC, DIAYN, and DADS%
\footnote{We do not compare against DADS in pixel-based environments
due to the computational cost of its skill dynamics model $p(s'|s, z)$, which requires predicting the full next image.}.
We implement these methods on the same codebase as METRA.
For comparison, we employ two metrics: policy coverage and downstream task performance.
\Cref{fig:main_skill} presents the policy coverage results,
where we evaluate the skill policy's $x$ coverage (HalfCheetah), $x$-$y$ coverage (Ant, Quadruped, and Humanoid), or task (Kitchen) coverage at each evaluation epoch.
The results show that METRA achieves the best performance in most of the domains, and is the only method that successfully learns meaningful skills in the pixel-based settings, where previous skill discovery methods generally fail.
In \Cref{fig:down}, we evaluate the applicability of the skills discovered by each method to downstream tasks, where the downstream task is learned by a hierarchical controller $\pi^h(z|s)$ that selects (frozen) learned skills to maximize the task reward (see \Cref{sec:exp_detail} for details).
METRA again achieves the best performance on most of these tasks, suggesting that the behaviors learned by METRA not only provide greater coverage,
but also are more suitable for downstream tasks in these domains.

\begin{figure}[t!]
    \centering
    \vspace{-10pt}
    \includegraphics[width=\linewidth]{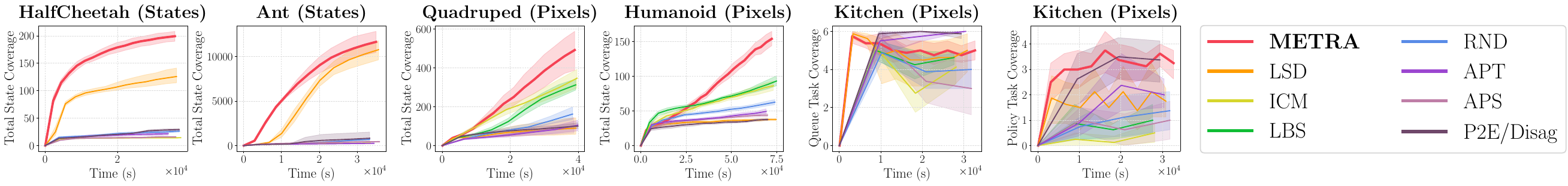}
    \vspace{-20pt}
    \caption{
    \footnotesize 
    \textbf{Quantitative comparison with pure exploration methods ($\mathbf{8}$ seeds).}
    We compare METRA with six unsupervised exploration methods in terms of state coverage.
    Since it is practically infeasible to completely cover every possible state or transition,
    pure exploration methods struggle to explore the state space of complex environments, such as pixel-based Humanoid or state-based Ant.
    }
    \vspace{-5pt}
    \label{fig:main_exp}
\end{figure}

\begin{figure}[t!]
    \centering
    \includegraphics[width=0.9\linewidth]{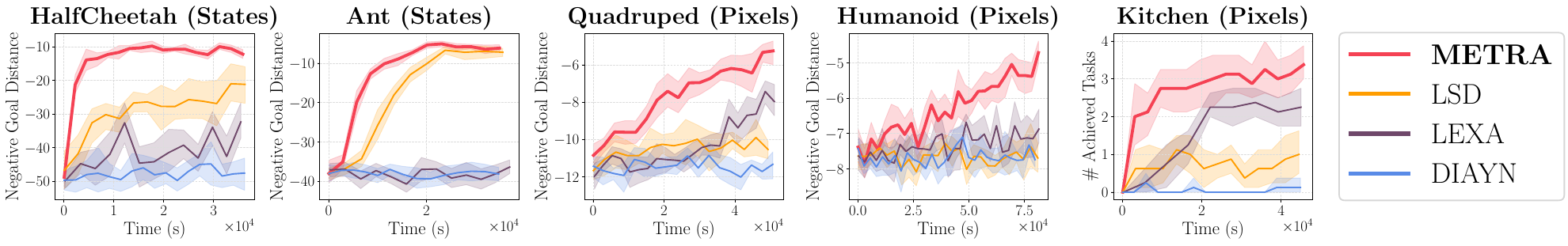}
    \vspace{-10pt}
    \caption{
    \footnotesize 
    \textbf{Downstream task performance comparison with LEXA ($\mathbf{8}$ seeds).}
    We compare METRA against LEXA, a state-of-the-art unsupervised goal-reaching method, on five goal-conditioned tasks.
    The skills learned by METRA can be employed to solve these tasks in a zero-shot manner, achieving the best performance.
    }
    \vspace{-15pt}
    \label{fig:main_goal}
\end{figure}

\textbf{Comparison with pure exploration methods.}
Next, we quantitatively compare METRA to five unsupervised exploration methods, which do not aim to learn skills but only attempt to cover the state space, ICM, LBS%
\footnote{Since LBS requires a world model, we only evaluate it on pixel-based environments,
where we use the Dreamer variants of pure exploration methods~\citep{murlb_rajeswar2023}.},
RND, APT, and Plan2Explore (or Disagreement),
and one hybrid method that combines exploration and successor features,
APS.
We use the original implementations by \citet{urlb_laskin2021} for state-based environments
and the Dreamer versions by \citet{murlb_rajeswar2023} for pixel-based environments.
As the underlying RL backbones are very different (\eg, Dreamer is model-based, while METRA uses model-free SAC),
we compare the methods based on wall clock time.
For the metric,
instead of policy coverage (as in \Cref{fig:main_skill}),
we measure \emph{total} state coverage (\ie, the number of bins covered by any \emph{training} trajectories up to each evaluation epoch). This metric is more generous toward the exploration methods, since such methods might not cover the entire space on any single iteration, but rather visit different parts of the space on different iterations (in contrast to our method, which aims to produce diverse skills).
In Kitchen, we found that most methods max out the total task coverage metric,
and we instead use both the queue coverage and policy coverage metrics (see \Cref{sec:exp_detail} for details).
\Cref{fig:main_exp} presents the results,
showing that METRA achieves the best coverage in most of the environments.
While pure exploration methods also work decently in the pixel-based Kitchen,
they fail to fully explore the state spaces of state-based Ant and pixel-based Humanoid,
which have complex dynamics with nearly infinite possible combinations of positions, joint angles, and velocities.

\textbf{Comparison with unsupervised goal-reaching methods.}
Finally, we compare METRA with LEXA, a state-of-the-art unsupervised goal-reaching method.
LEXA trains an exploration policy with Plan2Explore~\citep{p2e_sekar2020},
which maximizes epistemic uncertainty in the transition dynamics model,
in parallel with a goal-conditioned policy $\pi(a|s, g)$
on the data collected by the exploration policy.
We compare the performances of METRA, LEXA, and two previous skill discovery methods (DIAYN and LSD)
on five goal-reaching downstream tasks.
We use the procedure described in \Cref{sec:full_obj} to solve goal-conditioned tasks in a zero-shot manner with METRA.
\Cref{fig:main_goal} presents the comparison results,
where METRA achieves the best performance on all of the five downstream tasks.
While LEXA also achieves non-trivial performances in three tasks,
it struggles with state-based Ant and pixel-based Humanoid,
likely because it is practically challenging
to completely capture the transition dynamics of these complex environments.

\cutsectionup
\section{Conclusion}
\cutsectiondown

In this work, we presented METRA, a scalable unsupervised RL method
based on the idea of covering a compact latent skill space that is connected to the state space by
a temporal distance metric.
We showed that METRA learns diverse useful behaviors in various locomotion and manipulation environments,
being the first unsupervised RL method that learns locomotion behaviors in pixel-based Quadruped and Humanoid.

\textbf{Limitations.}
Despite its state-of-the-art performance in several benchmark environments,
METRA, in its current form, has limitations.
We refer to \Cref{sec:full_obj} for the limitations and future research directions regarding the METRA objective.
In terms of practical implementation, METRA, like other similar unsupervised skill discovery methods~\citep{dads_sharma2020,lsd_park2022,csd_park2023},
uses a relatively small update-to-data (UTD) ratio (\ie, the average number of gradient steps per environment step);
\eg, we use $1/4$ for Kitchen and $1/16$ for Quadruped and Humanoid.
Although we demonstrate that METRA learns efficiently in terms of wall clock time,
we believe there is room for improvement in terms of sample efficiency.
This is mainly because we use vanilla SAC~\citep{sac_haarnoja2018} as its RL backbone for simplicity,
and we believe increasing the sample efficiency of METRA by combining it with
recent techniques in model-free RL~\citep{drq_kostrikov2021,redq_chen2021,droq_hiraoka2022}
or model-based RL~\citep{dreamer_hafner2020,tdmpc_hansen2022}
is an interesting direction for future work.

Another limitation of this work is that, while we evaluate METRA on various locomotion and manipulation environments,
following prior work in unsupervised RL and unsuperivsed skill discovery~\citep{diayn_eysenbach2019,dads_sharma2020,lexa_mendonca2021,urlb_laskin2021,wurl_he2022,lsd_park2022,mos_zhao2022,disk_shafiullah2022,cic_laskin2022,csd_park2023,becl_yang2023},
we have not evaluated METRA on other different types of environments, such as Atari games.
Also, since we assume a fixed MDP (\ie, stationary, fully observable dynamics, \Cref{sec:prelim}),
METRA in its current form does not particularly deal with non-stationary or non-Markovian dynamics.
We leave applying METRA to more diverse environments or extending the idea behind METRA to non-stationary or non-Markovian environments for future work.

\textbf{Final remarks.}
In unsupervised RL, many excellent prior works have explored pure exploration or mutual information skill learning.
However, given that these methods are not necessarily readily scalable to complex environments with high intrinsic state dimensionality,
as discussed in \Cref{sec:related},
we may need a completely different approach to enable truly scalable unsupervised RL.
We hope that this work serves as a step toward broadly applicable unsupervised RL that
enables large-scale pre-training with minimal supervision.

\section*{Acknowledgments}
We would like to thank Youngwoon Lee for an informative discussion, and RAIL members and anonymous reviewers for their helpful comments.
This work was partly supported by the Korea Foundation for Advanced Studies (KFAS),
ARO W911NF-21-1-0097, and the Office of Naval Research.
This research used the Savio computational cluster resource provided by the Berkeley Research Computing program at UC Berkeley.

\section*{Reproducibility Statement}

We provide our code at the following repository: \metracode.
We provide the full experimental details in \Cref{sec:exp_detail}.

\bibliography{iclr2024_conference}
\bibliographystyle{iclr2024_conference}

\clearpage

\appendix

\begin{figure}[t!]
    \centering
    \includegraphics[width=\linewidth]{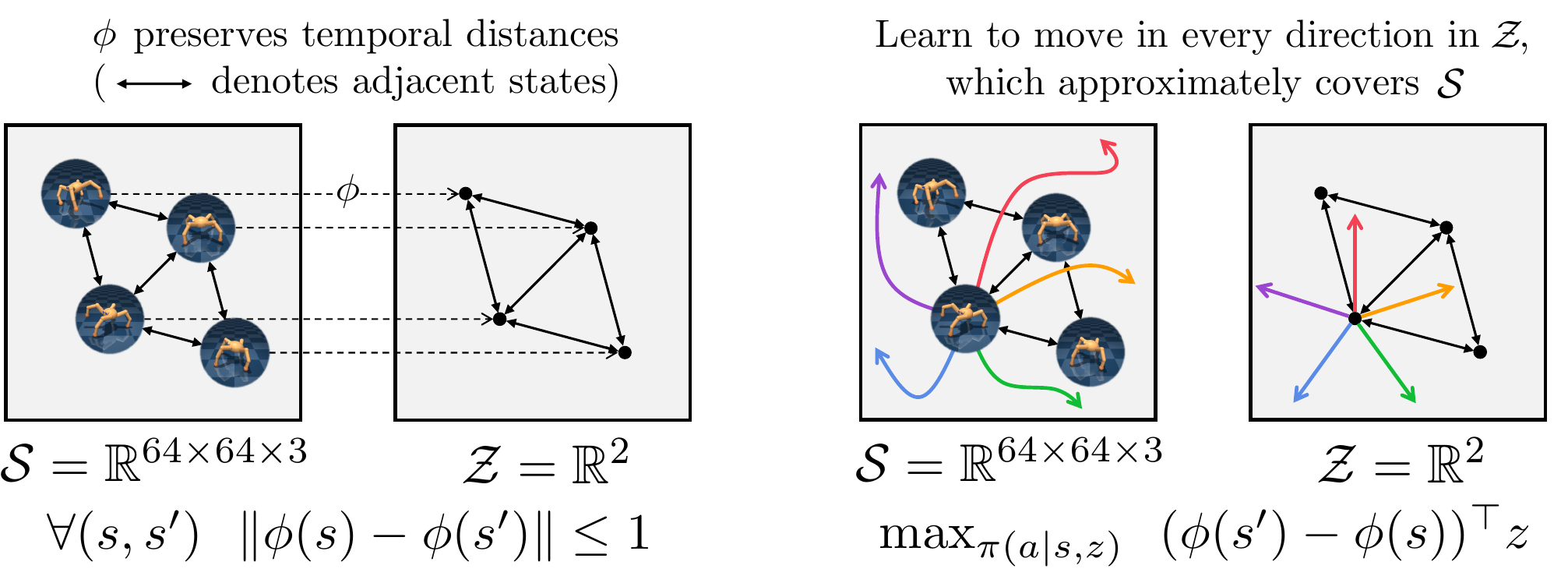}
    \caption{
    \footnotesize 
    \textbf{Intuitive interpretation of METRA.}
    Our main idea is to only cover a compact latent space $\gZ$ that is \emph{metrically} connected to the state space $\gS$.
    Specifically, METRA learns a lossy, compact representation function $\phi: \gS \to \gZ$, which preserves \emph{temporal} distances (\emph{left}),
    and learns to move in every direction in $\gZ$, which leads to approximate coverage of the state space $\gS$ (\emph{right}).
    }
    \vspace{10pt}
    \label{fig:intuition}
\end{figure}

\begin{figure}[t!]
    \centering
    \begin{subfigure}{0.47\textwidth}
        \centering
        \includegraphics[width=\textwidth]{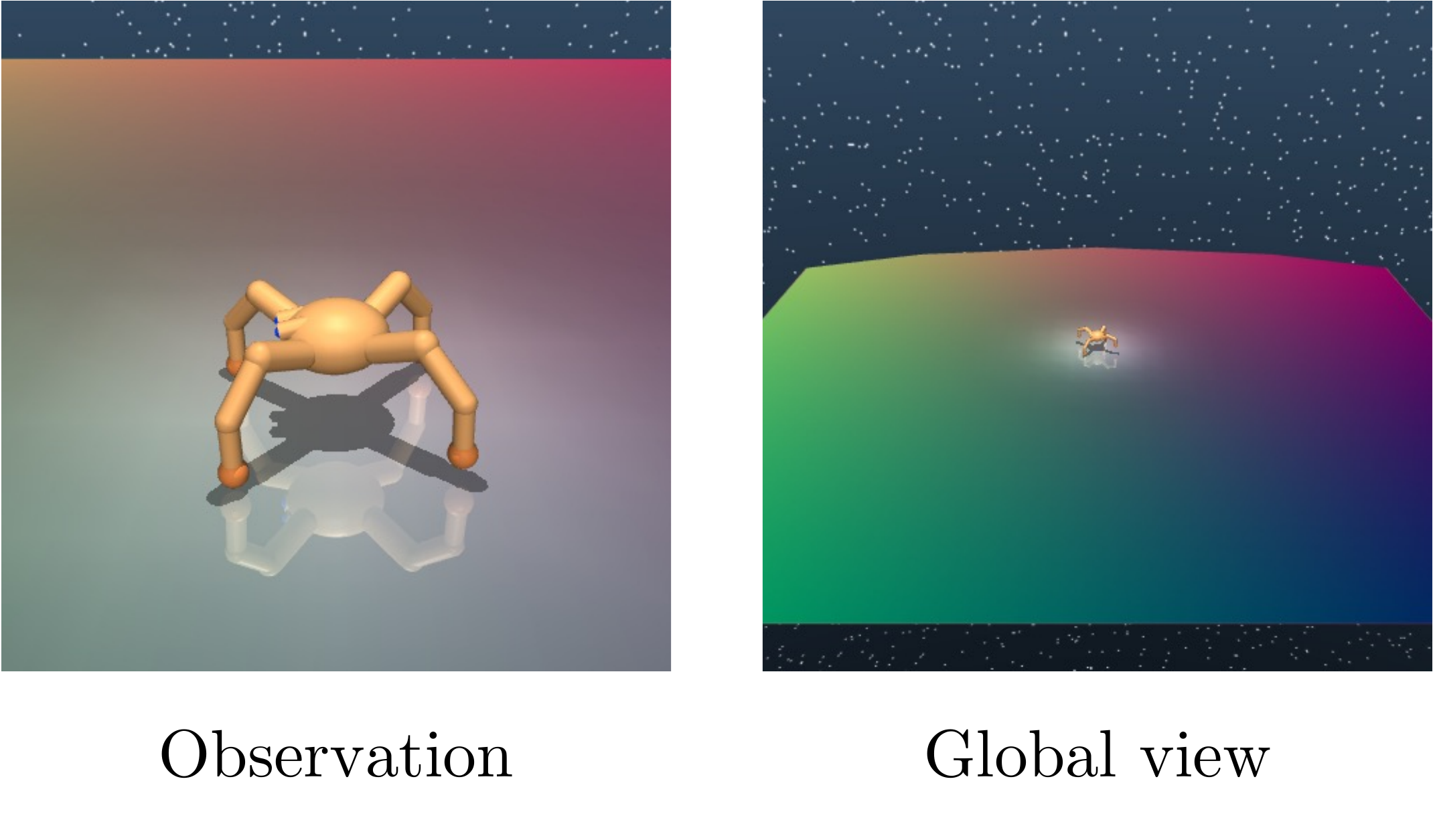}
        \vspace{-15pt}
        \caption{Quadruped}
    \end{subfigure}%
    \hfill%
    \begin{subfigure}{0.47\textwidth}
        \centering
        \includegraphics[width=\textwidth]{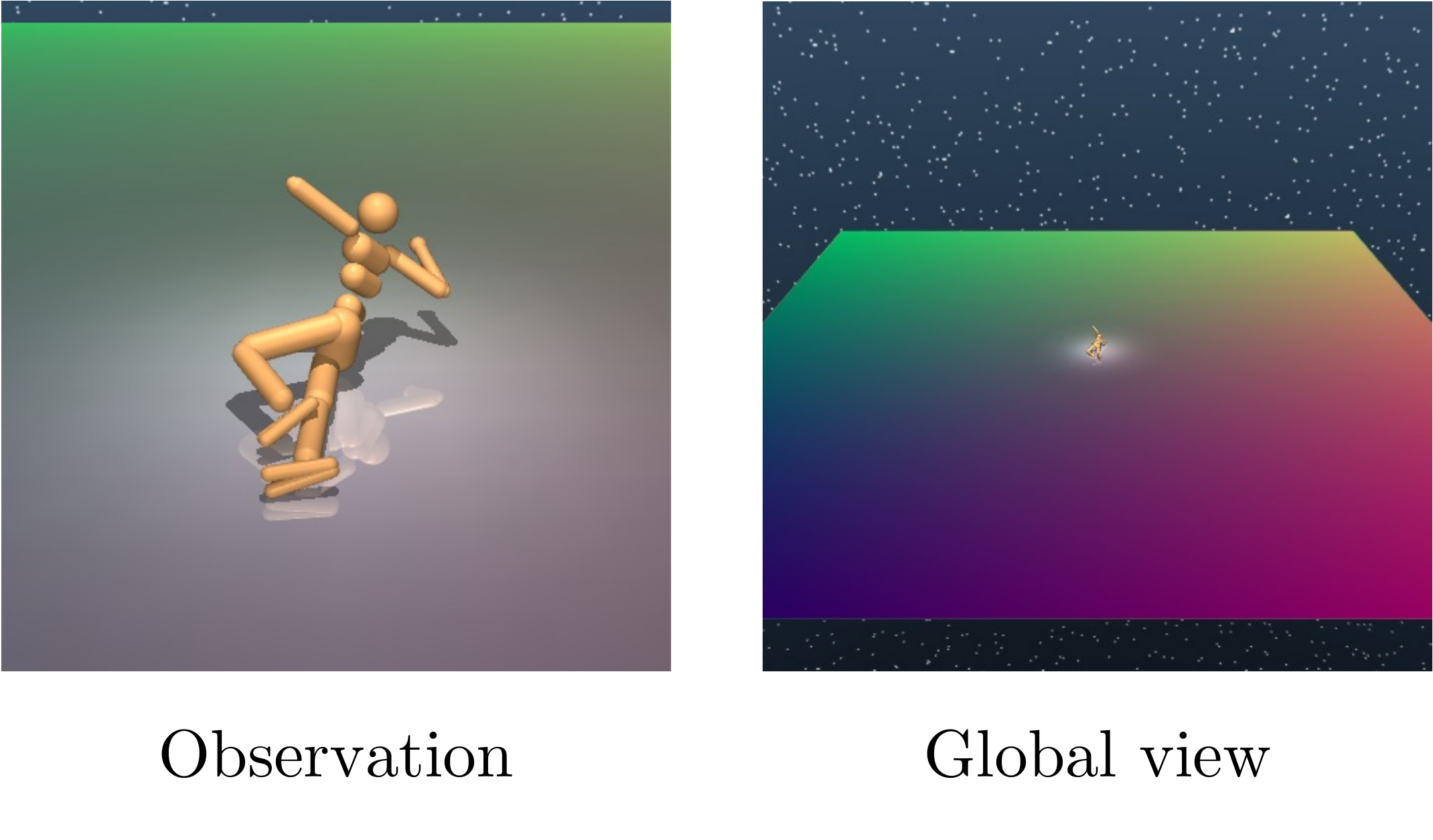}
        \vspace{-15pt}
        \caption{Humanoid}
    \end{subfigure}
    \caption{
    \footnotesize 
    \textbf{Visualization of pixel-based DMC Quadruped and Humanoid.}
    We use gradient-colored floors to allow the agent to infer its location from pixel observations, similarly to \citet{director_hafner2022,hiql_park2023}.
    }
    \label{fig:dqd_dhu}
\end{figure}

\section{Extended Related Work}
\label{sec:ext_related}
In addition to
unsupervised skill discovery~\citep{variational_mohamed2015,vic_gregor2016,snn4hrl_florensa2017,sectar_coreyes2018,valor_achiam2018,diayn_eysenbach2019,discern_wardefarley2019,max_shyam2019,smm_lee2019,dads_sharma2020,edl_campos2020,visr_hansen2020,skewfit_pong2020,rvic_baumli2021,ve_choi2021,protorl_yarats2021,ibol_kim2021,hidio_zhang2021,wurl_he2022,disdain_strouse2022,cic_laskin2022,lsd_park2022,disk_shafiullah2022,rest_jiang2022,mos_zhao2022,upside_kamienny2022,pma_park2023,csd_park2023,irrl_li2023,vcrl_kim2023}
and pure exploration (or unsupervised goal-conditioned RL) methods~\citep{vime_houthooft2016,count_bellemare2016,sharpe_tang2017,count_ostrovski2017,ex2_fu2017,icm_pathak2017,exp_hazan2019,max_shyam2019,rnd_burda2019,disag_pathak2019,smm_lee2019,goexplore_ecoffet2020,mega_pitis2020,ngu_badia2020,mepol_mutti2021,apt_liu2021,lexa_mendonca2021,protorl_yarats2021,re3_seo2021,lbs_mazzaglia2022,choreo_mazzaglia2023,peg_hu2023,murlb_rajeswar2023},
there have also been proposed other types of unsupervised RL approaches,
such as ones based on asymmetric self-play~\citep{asp_sukhbaatar2018,asp_openai2021},
surprise minimization~\citep{smirl_berseth2021,ic2_rhinehart2021},
and forward-backward representations~\citep{fb_touati2021,zs_touati2023}.
One potentially closely related line of work is graph Laplacian-based option discovery methods~\citep{eigen_machado2017,eigen_machado2018,dceo_klissarov2023}.
These methods learn a set of diverse behaviors based on the eigenvectors of the graph Laplacian of the MDP's adjacency matrix.
Although we have not found a formal connection to these methods,
we suspect there might exist a deep, intriguing connection between METRA and graph Laplacian-based methods,
given that they both discover behaviors based on the temporal dynamics of the MDP.
METRA is also related to several works in goal-conditioned RL that consider
temporal distances~\citep{gcrl_kaelbling1993,uvfa_schaul2015,sptm_savinov2018,sorb_eysenbach2019,td_florensa2019,ddl_hartikainen2020,aim_durugkar2021,quasi_wang2023}.
In particular, \citet{aim_durugkar2021,quasi_wang2023} use similar temporal distance constraints to ours for goal-conditioned RL.

\section{Theoretical results}
\label{sec:theory}

\subsection{Universality of Inner Product Decomposition}

\begin{lemma} \label{thm:univ_lemma}
Let $\gX$ and $\gY$ be compact Hausdorff spaces (\eg, compact subsets in $\sR^N$) and $\gC(\gA)$ be the set of real-valued continuous functions on $\gA$.
For any function $f(x, y) \in \gC(\gX \times \gY)$ and $\epsilon > 0$,
there exist continuous functions $\phi(x): \gX \to \sR^D$ and $\phi(y): \gY \to \sR^D$ with $D \geq 1$ such that 
$\sup_{x \in \gX, y \in \gY} |f(x, y) - \phi(x)^\top \psi(y)| < \eps$.
\end{lemma}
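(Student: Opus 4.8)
The plan is to recognize this as a direct instance of the Stone--Weierstrass theorem. Let $\mathcal{A} \subseteq \gC(\gX \times \gY)$ denote the set of all functions of the form $(x,y) \mapsto \sum_{i=1}^D g_i(x) h_i(y)$ with $D \geq 1$, $g_i \in \gC(\gX)$, and $h_i \in \gC(\gY)$; equivalently, $\mathcal{A}$ is exactly the set of functions expressible as $\phi(x)^\top \psi(y)$ for some continuous $\phi: \gX \to \sR^D$ and $\psi: \gY \to \sR^D$. The lemma then amounts precisely to the claim that $\mathcal{A}$ is dense in $\gC(\gX \times \gY)$ under the uniform norm, so the whole proof reduces to verifying the hypotheses of Stone--Weierstrass for $\mathcal{A}$.

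First I would check that $\mathcal{A}$ is a subalgebra of $\gC(\gX \times \gY)$ containing the constants. Closure under addition and scalar multiplication is immediate by concatenating the finite lists of factors; closure under multiplication follows from distributing $\left(\sum_i g_i(x) h_i(y)\right)\left(\sum_j g_j'(x) h_j'(y)\right) = \sum_{i,j} \bigl(g_i g_j'\bigr)(x)\bigl(h_i h_j'\bigr)(y)$, which is again a finite sum of products of a continuous function on $\gX$ with a continuous function on $\gY$; and the constant $c$ arises with $D = 1$, $\phi \equiv c$, $\psi \equiv 1$. Next I would check that $\mathcal{A}$ separates points: given $(x_1,y_1) \neq (x_2,y_2)$, without loss of generality $x_1 \neq x_2$, and since $\gX$ is compact Hausdorff it is normal, so Urysohn's lemma yields $g \in \gC(\gX)$ with $g(x_1) \neq g(x_2)$; then $(x,y) \mapsto g(x) = g(x)\cdot 1$ lies in $\mathcal{A}$ and takes different values at the two points, the case $y_1 \neq y_2$ being symmetric.

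With these facts in hand, I would invoke the Stone--Weierstrass theorem: $\gX \times \gY$ is compact (Tychonoff's theorem) and Hausdorff, so the unital point-separating subalgebra $\mathcal{A}$ is uniformly dense in $\gC(\gX \times \gY)$. Hence for any $f \in \gC(\gX \times \gY)$ and any $\eps > 0$ there exist $D$ and $g_1, \dots, g_D \in \gC(\gX)$, $h_1, \dots, h_D \in \gC(\gY)$ with $\sup_{x \in \gX, y \in \gY} \bigl| f(x,y) - \sum_{i=1}^D g_i(x) h_i(y) \bigr| < \eps$; taking $\phi = (g_1, \dots, g_D)$ and $\psi = (h_1, \dots, h_D)$ gives the claim. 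I do not anticipate a serious obstacle here: the only points requiring any care are the routine verification that $\mathcal{A}$ is closed under multiplication and the use of normality and Urysohn's lemma (rather than mere complete regularity) to obtain point separation from the compact Hausdorff hypothesis.
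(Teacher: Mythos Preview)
Your proposal is correct and follows essentially the same approach as the paper: both define the set of finite sums $\sum_i g_i(x)h_i(y)$, verify the hypotheses of Stone--Weierstrass (the paper only spells out the multiplication closure, which you also check by the same distributive expansion), and conclude density. Your write-up is slightly more thorough in explicitly handling point separation via Urysohn's lemma and the inclusion of constants, whereas the paper states these as obvious and uses the ``vanishes at no point'' formulation of Stone--Weierstrass, but this is a cosmetic difference.
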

\begin{proof}
We invoke the Stone-Weierstrass theorem~(\citet{real_bass2013}, Theorem 20.44),
which implies that the set of functions $\gT := \{\sum_{i=1}^D \phi_i(x)\psi_i(y):
D \in \sN, \forall 1 \leq i \leq D, \phi_i \in \gC(\gX), \psi_i \in \gC(\gY)\}$ is dense in $\gC(\gX \times \gY)$
if $\gT$ is an algebra that separates points and vanishes at no point.
The only non-trivial part is to show that $\gT$ is closed under multiplication.
Consider $g^{(1)}(x, y) = \sum_{i=1}^{D_1} \phi_i^{(1)}(x) \psi_i^{(1)}(y) \in \gT$
and $g^{(2)}(x, y) = \sum_{i=1}^{D_2} \phi_i^{(2)}(x) \psi_i^{(2)}(y) \in \gT$.
We have $g^{(1)}(x, y)g^{(2)}(x, y) = \sum_{i=1}^{D_1} \sum_{j=1}^{D_2} \phi_i^{(1)}(x)\phi_j^{(2)}(x)\psi_i^{(1)}(y)\psi_j^{(2)}(y)$,
where $\phi_i^{(1)}(x)\phi_j^{(2)}(x) \in \gC(\gX)$ and $\psi_i^{(1)}(y)\psi_j^{(2)}(y) \in \gC(\gY)$ for all $i, j$.
Hence, $g^{(1)}(x, y)g^{(2)}(x, y) \in \gT$.
\end{proof}

\begin{theorem}[$\phi(x)^\top \psi(y)$ is a universal approximator of $f(x, y)$] \label{thm:univ}
Let $\gX$ and $\gY$ be compact Hausdorff spaces and
$\Phi \subset \gC(\gX)$ and $\Psi \subset \gC(\gY)$ be dense sets in $\gC(\gX)$ and $\gC(\gY)$, respectively.
Then, $\gT := \{\sum_{i=1}^D \phi_i(x)\psi_i(y):
D \in \sN, \forall 1 \leq i \leq D, \phi_i \in \Phi, \psi_i \in \Psi\}$ is also dense in $\gC(\gX \times \gY)$.
In other words,
$\phi(x)^\top \psi(y)$ can approximate $f(x, y)$ to arbitrary accuracy
if $\phi$ and $\psi$ are modeled with universal approximators (\eg, neural networks) and $D \to \infty$.
\end{theorem}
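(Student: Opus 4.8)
The plan is to deduce Theorem~\ref{thm:univ} from Lemma~\ref{thm:univ_lemma} by an ``approximation within an approximation'' argument. Let $\gT' := \{\sum_{i=1}^D \phi_i(x)\psi_i(y) : D \in \sN,\ \phi_i \in \gC(\gX),\ \psi_i \in \gC(\gY)\}$ be the analogue of $\gT$ in which each factor is allowed to range over the \emph{entire} space of continuous functions. Lemma~\ref{thm:univ_lemma} already shows $\gT'$ is dense in $\gC(\gX \times \gY)$ in the sup norm $\|\cdot\|_\infty$. Since $\gT \subseteq \gT'$, it therefore suffices to prove that every element of $\gT'$ lies in the $\|\cdot\|_\infty$-closure of $\gT$; combining the two facts yields $\gC(\gX\times\gY) = \overline{\gT'} \subseteq \overline{\gT}$, as desired.

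To this end I would fix $g = \sum_{i=1}^D \phi_i \psi_i \in \gT'$ and $\eps > 0$. Because $\gX$ and $\gY$ are compact, each $\phi_i$ and $\psi_i$ is bounded; set $M := \max_{1 \le i \le D} \max(\|\phi_i\|_\infty, \|\psi_i\|_\infty) < \infty$. Using density of $\Phi$ in $\gC(\gX)$ and of $\Psi$ in $\gC(\gY)$, pick $\tilde\phi_i \in \Phi$ and $\tilde\psi_i \in \Psi$ with $\|\phi_i - \tilde\phi_i\|_\infty < \delta$ and $\|\psi_i - \tilde\psi_i\|_\infty < \delta$, where $\delta > 0$ is to be chosen at the end. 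The key step is the termwise telescoping bound
\[
\|\phi_i\psi_i - \tilde\phi_i\tilde\psi_i\|_\infty \ \le\ \|\phi_i - \tilde\phi_i\|_\infty\,\|\psi_i\|_\infty \ +\ \|\tilde\phi_i\|_\infty\,\|\psi_i - \tilde\psi_i\|_\infty \ \le\ \delta\big(M + (M + \delta)\big),
\]
where $\|\tilde\phi_i\|_\infty \le \|\phi_i\|_\infty + \delta \le M + \delta$ by the triangle inequality. Summing over the $D$ (fixed) terms gives $\big\|g - \sum_{i=1}^D \tilde\phi_i\tilde\psi_i\big\|_\infty \le D\,\delta\,(2M + \delta)$, so choosing $\delta$ small enough (depending only on $\eps$, $D$, $M$) makes this $< \eps$. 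Since $\sum_{i=1}^D \tilde\phi_i\tilde\psi_i \in \gT$, we conclude $g \in \overline{\gT}$, completing the reduction.

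I do not expect any genuine obstacle here — the argument is pure $\eps$-bookkeeping. The only points meriting a moment's care are the cross-term bound $\|\tilde\phi_i\|_\infty \le M + \delta$ (needed so the product $\tilde\phi_i\tilde\psi_i$ stays controlled) and the fact that $D$ is held fixed, so that the $D$ termwise errors add up to a quantity that can be driven below $\eps$. Notably, we never require $\Phi$ or $\Psi$ to be algebras, to contain the constants, or to separate points: those hypotheses enter only through Lemma~\ref{thm:univ_lemma}, which is applied to the full spaces $\gC(\gX)$ and $\gC(\gY)$, whereas here we use only that $\Phi$ and $\Psi$ are dense. The closing remark about neural networks is then immediate: any family of universal approximators is by definition dense in $\gC(\gX)$ (resp. $\gC(\gY)$) over a compact domain, so it may play the role of $\Phi$ (resp. $\Psi$), and letting $D \to \infty$ recovers arbitrary accuracy.
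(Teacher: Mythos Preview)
Your proposal is correct and follows essentially the same route as the paper: invoke Lemma~\ref{thm:univ_lemma} to approximate $f$ by a finite sum $\sum_{i=1}^D \phi_i\psi_i$ with $\phi_i \in \gC(\gX)$, $\psi_i \in \gC(\gY)$, then replace each factor by a nearby element of $\Phi$ or $\Psi$ and control the product error via a telescoping bound. The only cosmetic difference is bookkeeping: the paper picks the $\tilde\phi_i$ first, defines $M_x$ from them, and then picks the $\tilde\psi_i$ with tolerance $\eps/(3DM_x)$, whereas you use a single parameter $\delta$ and the bound $\|\tilde\phi_i\|_\infty \le M + \delta$; both arrive at the same conclusion.
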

\begin{proof}
By \Cref{thm:univ_lemma},
for any $f \in \gC(\gX \times \gY)$ and $\eps > 0$,
there exist $D \in \sN$, $\phi_i \in \gC(\gX)$, and $\psi_i \in \gC(\gY)$ for $1 \leq i \leq D$ such that
$\sup_{x \in \gX, y \in \gY}|f(x, y) - \sum_{i=1}^D \phi_i(x) \psi_i(y)| < \eps / 3$.
Define
$M_y := \sup_{1 \leq i \leq D, y \in \gY} |\psi_i(y)|$.
Since $\Phi$ is dense, for each $1 \leq i \leq D$,
there exists $\tilde \phi_i \in \Phi$ such that
$\sup_{x \in \gX}|\phi_i(x) - \tilde \phi_i(x)| < \eps/(3DM_y)$.
Define
$M_x := \sup_{1 \leq i \leq D, x \in \gX} |\tilde \phi_i(x)|$.
Similarly, for each $1 \leq i \leq D$,
there exists $\tilde \psi_i \in \Psi$ such that
$\sup_{y \in \gY}|\psi_i(y) - \tilde \psi_i(y)| < \eps/(3DM_x)$.
Now, we have
\begin{align}
    \left|f(x, y) - \sum_{i=1}^D \tilde \phi_i(x) \tilde \psi_i(y) \right|
    &\leq \left|f(x, y) - \sum_{i=1}^D \phi_i(x) \psi_i(y)\right|
        + \sum_{i=1}^D \left|\tilde \phi_i(x) \tilde \psi_i (y) - \phi_i (x) \psi_i (y)\right| \\
    &< \frac{\eps}{3} + \sum_{i=1}^D |\tilde \phi_i(x)(\tilde \psi_i(y) - \psi_i(y))|
        + \sum_{i=1}^D |(\tilde \phi_i(x) - \phi_i(x))\psi_i(y)| \\
    &< \frac{\eps}{3} + \frac{\eps}{3} + \frac{\eps}{3} \\
    &= \eps,
\end{align}
for any $x \in \gX$ and $y \in \gY$.
Hence, $\gT$ is dense in $\gC(\gX \times \gY)$.
\end{proof}

\subsection{Lipschitz Constraint under the Temporal Distance Metric}
\begin{theorem} \label{thm:lip_temp}
The following two conditions are equivalent:
\begin{enumerate}[label=(\alph*)]
    \item $\|\phi(u) - \phi(v)\|_2 \leq d_\mathrm{temp}(u, v)$ for all $u, v \in \gS$.
    \item $\|\phi(s) - \phi(s')\|_2 \leq 1$ for all $(s, s') \in \gS_{\mathrm{adj}}$.
\end{enumerate}
\end{theorem}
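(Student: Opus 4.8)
The plan is to establish the two implications (a)$\Rightarrow$(b) and (b)$\Rightarrow$(a) separately, the first being immediate and the second following from a telescoping application of the triangle inequality along a shortest path. For (a)$\Rightarrow$(b), I would simply observe that any adjacent pair $(s,s') \in \gS_{\mathrm{adj}}$ is reachable in one step, so $d_\mathrm{temp}(s,s') \leq 1$; substituting $u = s$, $v = s'$ into condition (a) then gives $\|\phi(s) - \phi(s')\|_2 \leq d_\mathrm{temp}(s,s') \leq 1$, which is condition (b).

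For the converse (b)$\Rightarrow$(a), fix arbitrary $u, v \in \gS$ and set $n = d_\mathrm{temp}(u,v)$. If $n = \infty$ (i.e.\ $v$ is not reachable from $u$), condition (a) holds vacuously, so assume $n < \infty$. By the definition of the temporal distance as the minimum number of environment steps, there is a trajectory $u = s_0, s_1, \dots, s_n = v$ realizing this minimum, with each consecutive pair $(s_i, s_{i+1}) \in \gS_{\mathrm{adj}}$. Applying the triangle inequality for the Euclidean norm and then condition (b) term by term yields
\begin{align}
    \|\phi(u) - \phi(v)\|_2 \leq \sum_{i=0}^{n-1} \|\phi(s_i) - \phi(s_{i+1})\|_2 \leq \sum_{i=0}^{n-1} 1 = n = d_\mathrm{temp}(u,v),
\end{align}
which is exactly condition (a). The case $u = v$ (so $n = 0$) is covered since both sides are $0$.

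I do not expect a genuine obstacle here; the argument is a standard "shortest-path metric is the largest metric bounded by edge weights" fact. The only points requiring care are bookkeeping: (i) handling $d_\mathrm{temp} = \infty$ so that condition (a) is stated for all pairs, not just reachable ones; (ii) noting that the chaining argument only uses reachability in the $u \to v$ direction, so it is consistent with $d_\mathrm{temp}$ possibly being asymmetric (the theorem makes no symmetry claim); and (iii) being precise about the convention for $\gS_{\mathrm{adj}}$ — whether it contains pairs $(s,s)$ with self-transitions — which is harmless since $\|\phi(s) - \phi(s)\|_2 = 0 \leq 1$ either way. I would write the proof in the clean two-implication form above.
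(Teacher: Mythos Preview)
Your proposal is correct and matches the paper's own proof essentially line for line: both directions are handled the same way, with (a)$\Rightarrow$(b) following from $d_\mathrm{temp}(s,s')\leq 1$ for adjacent pairs, and (b)$\Rightarrow$(a) via the triangle inequality telescoped along a shortest path (with the $d_\mathrm{temp}=\infty$ case treated separately). Your additional remarks on the $u=v$ case, asymmetry, and self-loops are sound bookkeeping that the paper handles only implicitly or via a case split.
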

\begin{proof}
We first show (\emph{a}) implies (\emph{b}). Assume (\emph{a}) holds.
Consider $(s, s') \in \gS_{\mathrm{adj}}$.
If $s \neq s'$, by (\emph{a}), we have $\|\phi(s) - \phi(s')\|_2 \leq d_\mathrm{temp}(s, s') = 1$.
Otherwise, \ie, $s = s'$, $\|\phi(s) - \phi(s')\|_2 = 0 \leq 1$.
Hence, (\emph{a}) implies (\emph{b}).

Next, we show (\emph{b}) implies (\emph{a}). Assume (\emph{b}) holds.
Consider $u, v \in \gS$.
If $d_\mathrm{temp}(u, v) = \infty$ (\ie, $v$ is not reachable from $u$), (\emph{a}) holds trivially.
Otherwise, let $k$ be $d_\mathrm{temp}(u, v)$.
By definition, there exists $(s_0=u, s_1, \dots, s_{k-1}, s_k=v)$ such that
$(s_i, s_{i+1}) \in \gS_{\mathrm{adj}}$ for all $0 \leq i \leq k-1$.
Due to the triangle inequality and (\emph{b}),
we have $\|\phi(u) - \phi(v)\|_2 \leq \sum_{i=0}^{k-1}\|\phi(s_i) - \phi(s_{i+1})\|_2 \leq k = d_\mathrm{temp}(u, v)$.
Hence, (\emph{b}) implies (\emph{a}).
\end{proof}

\section{A Connection between METRA and PCA}
\label{sec:pca}

In this section, we derive a theoretical connection between METRA and principal component analysis (PCA).
Recall that the METRA objective can be written as follows:
\begin{align}
    \sup_{\pi, \phi} \ \ &\E_{p(\tau, z)} \left[ \sum_{t=0}^{T-1} (\phi(s_{t+1}) - \phi(s_t))^\top z \right] = \E_{p(\tau, z)} \left[ \phi(s_T)^\top z \right] \label{eq:metra_states} \\
    &\mathrm{s.t.} \ \ \|\phi(u) - \phi(v)\|_2 \leq d_{\mathrm{temp}}(u, v), \ \ \forall u, v \in \gS,
\end{align}
where $d_{\mathrm{temp}}$ denotes the temporal distance between two states.
To make a formal connection between METRA and PCA, we consider the following squared variant of the METRA objective in this section.
\begin{align}
    \sup_{\pi, \phi} \ \ \E_{p(\tau, z)} \left[ (\phi(s_T)^\top z)^2 \right] \ \ 
    \mathrm{s.t.} \ \ \|\phi(u) - \phi(v)\|_2 \leq d_{\mathrm{temp}}(u, v), \ \ \forall u, v \in \gS, \label{eq:metra_squared} 
\end{align}
which is almost the same as \Cref{eq:metra_states} but the objective is now squared.
The reason we consider this variant is simply for mathematical convenience.

Next, we introduce the notion of a temporally consistent embedding.
\begin{definition}[Temporally consistent embedding]
We call that an MDP $\gM$ admits a temporally consistent embedding
if there exists $\psi(s): \gS \to \sR^m$ such that
\begin{align}
    d^{\mathrm{temp}}(u, v) = \|\psi(u) - \psi(v)\|_2, \ \ \forall u, v \in \gS.
\end{align}
\end{definition}
Intuitively, this states that the temporal distance metric can be embedded into
a (potentially very high-dimensional) Euclidean space.
We note that $\psi$ is different from $\phi$ in \Cref{eq:metra_states},
and $\sR^m$ can be much higher-dimensional than $\gZ$.
An example of an MDP that admits a temporally consistent embedding is the PointMass environment:
if an agent in $\sR^n$ can move in any direction up to a unit speed,
$\psi(x) = x$ satisfies $d^{\mathrm{temp}}(u, v) = \|u - v\|_2$ for all $u, v \in \sR^n$ (with a slightly generalized notion of temporal distances in continuous time)
and thus the MDP admits the temporally consistent embedding of $\psi$.
A pixel-based PointMass environment is another example of such an MDP.

Now, we formally derive a connection between squared METRA and PCA.
For simplicity, we assume $\gZ = \sR^d$ and $p(z) = \gN(0, \mathrm{I}_d)$,
where $\gN(0, \mathrm{I}_d)$ denotes the $d$-dimensional isotropic Gaussian distribution.
We also assume that $\gM$ has a deterministic initial distribution and transition dynamics function,
and every state is reachable from the initial state within $T$ steps.
We denote the set of $n \times n$ positive definite matrices as $\sS_{++}^n$,
the operator norm of a matrix $A$ as $\|A\|_\op$,
and the $m$-dimensional unit $\ell_2$ ball as $\sB^m$.
\begin{theorem}[Linear squared METRA is PCA in the temporal embedding space] \label{thm:metra_pca}
Let $\gM$ be an MDP that admits a temporally consistent embedding $\psi: \gS \to \sR^m$.
If $\phi: \gS \to \gZ$ is a linear mapping from the embedding space,
\ie, $\phi(s) = W^\top \psi(s)$ with $W \in \sR^{m \times d}$,
and the embedding space $\Psi = \{\psi(s): s \in \gS\}$ forms an ellipse,
\ie, $\exists A \in \sS_{++}^m$ s.t. $\Psi = \{x \in \sR^m: x^\top A^{-1} x \leq 1\}$,
then $W = [a_1 \ a_2 \ \cdots \ a_d]$ maximizes the squared METRA objective in \Cref{eq:metra_squared},
where $a_1$, \dots, $a_d$ are the top-$d$ eigenvectors of $A$.
\end{theorem}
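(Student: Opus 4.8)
The plan is to collapse the constrained optimization over the pair $(\pi,\phi)$ to a finite-dimensional eigenvalue problem in three stages, and then finish with a Ky Fan–type argument.

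First I would rewrite the Lipschitz constraint. Since $\gM$ admits the temporally consistent embedding $\psi$, we have $d_{\mathrm{temp}}(u,v)=\|\psi(u)-\psi(v)\|_2$, so the constraint $\|\phi(u)-\phi(v)\|_2\le d_{\mathrm{temp}}(u,v)$ becomes $\|W^\top(\psi(u)-\psi(v))\|_2\le\|\psi(u)-\psi(v)\|_2$ for all $u,v\in\gS$. Because $\Psi=\{\psi(s):s\in\gS\}=\{x\in\sR^m:x^\top A^{-1}x\le1\}$ is a full-dimensional ellipsoid centered at the origin (recall $A\in\sS_{++}^m$, hence $0$ is interior to $\Psi$), the difference set $\{\psi(u)-\psi(v)\}=\Psi-\Psi$ contains a neighborhood of $0$ and thus spans every direction of $\sR^m$. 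Therefore the constraint is equivalent to $\|W^\top y\|_2\le\|y\|_2$ for all $y\in\sR^m$, i.e. $\|W\|_\op\le1$.

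Second I would eliminate the policy. Fix a feasible $\phi=W^\top\psi$ and a value $z$. The inner objective $(\phi(s_T)^\top z)^2=(z^\top W^\top\psi(s_T))^2$ is maximized over trajectories by steering $s_T$ to the state whose embedding maximizes $(Wz)^\top x$ over $x\in\Psi$; since the dynamics and initial state are deterministic and every state is reachable from the initial state within $T$ steps, such a policy exists, so $\sup_\pi$ of the episode return equals $\E_{z}\big[\max_{x\in\Psi}((Wz)^\top x)^2\big]$. A generalized Cauchy--Schwarz inequality in the $A^{-1}$-inner product gives $((Wz)^\top x)^2\le (z^\top W^\top A W z)(x^\top A^{-1}x)\le z^\top W^\top A W z$, with equality attained at the boundary point $x^\star\propto AWz$ of $\Psi$; hence $\max_{x\in\Psi}((Wz)^\top x)^2=z^\top W^\top A W z$. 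Taking $\E$ over $z\sim\gN(0,\mathrm{I}_d)$ and using $\E[zz^\top]=\mathrm{I}_d$, the objective in \Cref{eq:metra_squared} reduces exactly to $\max_{\|W\|_\op\le1}\mathrm{tr}(W^\top A W)$.

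Finally I would solve this matrix problem. With $P:=WW^\top$, the feasible set is $\{P\succeq0:P\preceq\mathrm{I}_m,\ \operatorname{rank}(P)\le d\}$ and the objective is $\mathrm{tr}(AP)$. Since $\operatorname{rank}(P)\le d$ together with $P\preceq\mathrm{I}_m$ forces $\mathrm{tr}(P)\le d$, dropping the rank constraint only enlarges the feasible set to $\{0\preceq P\preceq\mathrm{I}_m,\ \mathrm{tr}(P)\le d\}$; writing $A=V\Lambda V^\top$ and $p_i=(V^\top PV)_{ii}\in[0,1]$ with $\sum_i p_i\le d$ gives $\mathrm{tr}(AP)=\sum_i\lambda_i(A)p_i\le\sum_{j=1}^d\lambda_j(A)$, attained by the orthogonal projection onto the top-$d$ eigenspace of $A$ — which has rank $d$ and is thus feasible for the original problem. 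Since $W=[a_1\ \cdots\ a_d]$ gives precisely $WW^\top=\sum_{j=1}^d a_ja_j^\top$, this projection, and since $\|W\|_\op=1\le1$, it attains the optimum, which is the claim. I expect the main obstacle to be the second stage: rigorously justifying that $\sup_\pi$ of the return equals $\E_z[\max_s(\phi(s)^\top z)^2]$ (invoking reachability within $T$ and determinism) and that the ellipsoidal maximization is tight for every $z$; the third stage is then a routine eigenvalue computation and the first is immediate from the temporally consistent embedding.
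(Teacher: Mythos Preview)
Your proposal is correct and follows essentially the same three-stage approach as the paper: reduce the Lipschitz constraint to $\|W\|_\op\le 1$ via the temporally consistent embedding, eliminate $\pi$ by reachability and solve the ellipsoidal inner maximization to obtain $\mathrm{tr}(W^\top A W)$, then maximize over $\|W\|_\op\le 1$ to recover the top-$d$ eigenvectors of $A$. The only cosmetic differences are that the paper handles the inner maximization by the change of variables $g(z)=\sqrt{A^{-1}}f(z)$ and the dual-norm identity (where you use Cauchy--Schwarz in the $A^{-1}$ inner product), and it invokes Ky Fan's maximum principle directly (where you give the equivalent relaxation argument via $P=WW^\top$).
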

\begin{proof}
Since $\gM$ admits a temporally consistent embedding, we have
\begin{align}
&\|\phi(u) - \phi(v)\|_2 \leq d_{\mathrm{temp}}(u, v) \ \ \forall u, v \in \gS \\
\iff &\|W^\top (\psi(u) - \psi(v))\|_2 \leq \|\psi(u) - \psi(v)\|_2 \ \ \forall u, v \in \gS \\
\iff &\|W^\top (u - v)\|_2 \leq \|u - v\|_2 \ \ \forall u, v \in \Psi \\
\iff &\|W\|_\op \leq 1,
\end{align}
where we use the fact that $\psi$ is a surjection from $\gS$ to $\Psi$ and that $A$ is positive definite.
Now, we have
\begin{align}
=&\sup_{\pi, \|W\|_\op \leq 1} \E_{p(\tau, z)}[(\phi(s_T)^\top z)^2] \quad \\
=&\sup_{\pi, \|W\|_\op \leq 1} \E_{p(\tau, z)}[(\psi(s_T)^\top W z)^2] \\
=&\sup_{f: \sR^d \to \Psi, \|W\|_\op \leq 1} \E_{p(z)}[(f(z)^\top W z)^2] \quad (\because \text{Every state is reachable within $T$ steps}) \\
=&\sup_{g: \sR^d \to \sB^m, \|W\|_\op \leq 1} \E_{p(z)}[(g(z)^\top \sqrt A W z)^2] \quad (g(z) = \sqrt {A^{-1}} f(z)) \\
=&\sup_{\|W\|_\op \leq 1} \E_{p(z)}[\sup_{g: \sR^d \to \sB^m} (g(z)^\top \sqrt A W z)^2] \\
=&\sup_{\|W\|_\op \leq 1} \E_{p(z)}[\sup_{\|u\|_2 \leq 1} (u^\top \sqrt A W z)^2] \\
=&\sup_{\|W\|_\op \leq 1} \E_{p(z)}[ \|\sqrt A W z\|_2^2] \quad (\because \text{Dual norm}) \label{eq:dual_norm} \\
=&\sup_{\|W\|_\op \leq 1} \E_{p(z)}[ z^\top W^\top A W z] \\
=&\sup_{\|W\|_\op \leq 1} \E_{p(z)}[ \tr (zz^\top W^\top A W)] \\
=&\sup_{\|W\|_\op \leq 1} \tr(\E_{p(z)}[zz^\top] W^\top A W) \\
=&\sup_{\|W\|_\op \leq 1} \tr(WW^\top A).
\end{align}
Since $WW^\top$ is a positive semidefinite matrix with rank at most $d$ and $\|W\|_\op \leq 1$,
there exists $d$ eigenvalues $0 \leq \lambda_1, \dots, \lambda_d \leq 1$
and the corresponding orthonormal eigenvectors $v_1, \dots, v_d$
such that $WW^\top = \sum_{k=1}^d \lambda_k v_k v_k^\top$.
Hence, $\tr(WW^\top A) = \sum_{k=1}^d \lambda_k v_k^\top A v_k$,
and to maximize this, we must set $\lambda_1 = \dots = \lambda_d = 1$ as $A$ is positive definite.
The remaining problem is to find $d$ orthonormal vectors $v_1, \dots, v_d$ that maximize $\sum_{k=1}^d v_k^\top A v_k$.
By the Ky Fan's maximum principle~\citep{matrix_bhatia2013},
its solution is given as the $d$ eigenvectors corresponding to the $d$ largest eigenvalues of $A$.
Therefore, $W = [a_1 \ a_2 \ \cdots \ a_d]$, where $a_1, \dots, a_d$ are the top-$d$ principal components of $A$,
maximizes the squared METRA objective in \Cref{eq:metra_squared}.
\end{proof}

\Cref{thm:metra_pca} states that linear squared METRA is equivalent to PCA in the temporal embedding space.
In practice, however, $\phi$ can be nonlinear, the shape of $\Psi$ can be arbitrary,
and the MDP may not admit any temporally consistent embeddings.
Nonetheless, this theoretical connection hints at the intuition that the METRA objective encourages the agent
to span the largest ``temporal'' manifolds in the state space, given the limited capacity of $\gZ$.

\section{Connections between WDM and DIAYN, DADS, and CIC}
\label{sec:wdm_mi}

In this section, we describe connections between our WDM objectives (either $I_\gW(S; Z)$ or $I_\gW(S_T; Z)$)
and previous mutual information skill learning methods,
DIAYN~\citep{diayn_eysenbach2019}, DADS~\citep{dads_sharma2020}, and CIC~\citep{cic_laskin2022}.
Recall that the $I_\gW(S; Z)$ objective (\Cref{eq:wdm_decomp}) maximizes
\begin{align}
    \sum_{t=0}^{T-1} \left(
    \E_{p(\tau, z)}[\phi_L(s_t)^\top \psi_L(z)] - \E_{p(\tau)}[\phi_L(s_t)]^\top \E_{p(z)}[\psi_L(z)]
    \right), \label{eq:wdm_conn_state}
\end{align}
and the $I_\gW(S_T; Z)$ objective (\Cref{eq:wdm_tele}) maximizes
\begin{align}
    \sum_{t=0}^{T-1} \left(
    \E_{p(\tau, z)}[(\phi_L(s_{t+1}) - \phi_L(s_t))^\top \psi_L(z)] - \E_{p(\tau)}[\phi_L(s_{t+1}) - \phi_L(s_t)]^\top \E_{p(z)}[\psi_L(z)]
    \right), \label{eq:wdm_conn_diff}
\end{align}
where we use the notations $\phi_L$ and $\psi_L$ to denote that they are Lipschitz constrained.
By simplifying \Cref{eq:wdm_conn_state} or \Cref{eq:wdm_conn_diff} in three different ways,
we will show that we can obtain ``Wasserstein counterparts'' of DIAYN, DADS, and CIC.
For simplicity, we assume $p(z) = \gN(0, \mathrm{I})$, where $\gN(0, \mathrm{I})$ denotes the standard Gaussian distribution.

\subsection{DIAYN}
If we set {\color{myred} $\psi_L(z) = z$} in \Cref{eq:wdm_conn_state}, we get
\begin{align}
    r_t = \phi_L(s_t)^\top z. \label{eq:wdiayn}
\end{align}
This is analogous to DIAYN~\citep{diayn_eysenbach2019}, which maximizes
\begin{align}
    I(S; Z) &= -H(Z|S) + H(Z) \\
    &\gtrsim \E_{p(\tau, z)}\left[\sum_{t=0}^{T-1} \log q(z|s_t)\right] \\
    &\simeq \E_{p(\tau, z)}\left[\sum_{t=0}^{T-1} -\|z - \phi(s_t)\|_2^2 \right], \\
    r_t^{\mathrm{DIAYN}} &= -\|\phi(s_t) - z\|_2^2, \label{eq:diayn}
\end{align}
where `$\gtrsim$' and `$\simeq$' respectively denote `$>$' and `$=$' up to constant scaling or shifting,
and we assume that the variational distribution $q(z|s)$ is modeled as $\gN(\phi(s), \mathrm{I})$.
By comparing \Cref{eq:wdiayn} and \Cref{eq:diayn},
we can see that \Cref{eq:wdiayn} can be viewed as a Lipschitz, inner-product variant of DIAYN.
This analogy will become clearer later.

\subsection{DADS}
If we set {\color{myred} $\phi_L(s) = s$}
in \Cref{eq:wdm_conn_diff}, we get
\begin{align}
    r_t &= (s_{t+1} - s_t)^\top \psi_L(z) - (s_{t+1} - s_t)^\top \E_{p(z)}[\psi_L(z)] \\
    &\approx (s_{t+1} - s_t)^\top \psi_L(z) - \frac{1}{L}\sum_{i=1}^{L} (s_{t+1} - s_t)^\top \psi_L(z_i), \label{eq:wdads}
\end{align}
where we use $L$ independent samples from $\gN(0, I)$, $z_1, z_2, \dots, z_L$, to approximate the expectation.
This is analogous to DADS~\citep{dads_sharma2020}, which maximizes:
\begin{align}
    I(S'; Z|S) &= -H(S'|S, Z) + H(S'|S) \\
    &\gtrsim \E_{p(\tau, z)}\left[ \sum_{t=0}^{T-1} \log q(s_{t+1}|s_t, z) - \log p(s_{t+1}|s_t) \right] \\
    &\approx \E_{p(\tau, z)}\left[ \sum_{t=0}^{T-1} \left( \log q(s_{t+1}|s_t, z) - \frac{1}{L} \sum_{i=1}^L \log q(s_{t+1}|s_t, z_i) \right) \right], \\
    r_t^{\mathrm{DADS}} &= -\| (s_{t+1} - s_t) - \psi(s_t, z)\|_2^2 + \frac{1}{L}\sum_{i=1}^L \| (s_{t+1} - s_t) - \psi(s_t, z)\|_2^2, \label{eq:dads}
\end{align}
where we assume that the variational distribution $q(s'|s, z)$ is modeled as $q(s' - s|s, z) = \gN(\psi(s, z), \mathrm{I})$,
as in the original implementation~\citep{dads_sharma2020}.
We also use the same sample-based approximation as \Cref{eq:wdads}.
Note that the same analogy also holds between \Cref{eq:wdads} and \Cref{eq:dads}
(\ie, \Cref{eq:wdads} is a Lipschitz, inner-product variant of DADS).

\subsection{CIC}
\label{sec:wcic}
If we {\color{myred} do not simplify $\phi_L$ or $\psi_L$} in \Cref{eq:wdm_conn_state}, we get
\begin{align}
    r_t &= \phi_L(s_t)^\top \psi_L(z) - \phi_L(s_t)^\top \E_{p(z)}[\psi_L(z)] \\
    &\approx \phi_L(s_t)^\top \psi_L(z) - \frac{1}{L}\sum_{i=1}^{L} \phi_L(s_t)^\top \psi_L(z_i), \label{eq:wcic}
\end{align}
where we use the same sample-based approximation as \Cref{eq:wdads}.
By Jensen's inequality, \Cref{eq:wcic} can be lower-bounded by
\begin{align}
    \phi_L(s_t)^\top \psi_L(z) - \log \frac{1}{L}\sum_{i=1}^{L} \exp \left(\phi_L(s_t)^\top \psi_L(z_i)\right), \label{eq:wcic_lower}
\end{align}
as in WPC~\citep{wdm_ozair2019}.
This is analogous to CIC~\citep{cic_laskin2022},
which estimates the MI via noise contrastive estimation~\citep{nce_gutmann2010,cpc_oord2018,mi_poole2019}:
\begin{align}
    I(S; Z) &\gtrsim \E_{p(\tau, z)}\left[\sum_{t=0}^{T-1} \left( \phi(s_t)^\top \psi(z) - \log \frac{1}{L} \sum_{i=1}^L \exp \left(\phi(s_t)^\top \psi(z_i)\right) \right) \right], \\
    r_t^{\mathrm{CIC}} &= \phi(s_t)^\top \psi(z) - \log \frac{1}{L} \sum_{i=1}^L \exp \left(\phi(s_t)^\top \psi(z_i)\right). \label{eq:cic}
\end{align}
Note that \Cref{eq:wcic_lower} can be viewed as a Lipschitz variant of CIC (\Cref{eq:cic}).

In this work, we use the $\psi_L(z) = z$ simplification with \Cref{eq:wdm_conn_diff} (\ie, \Cref{eq:wdm_simple}),
as we found this variant to work well while being simple,
but we believe exploring these other variants is an interesting future research direction.
In particular, given that \Cref{eq:wcic_lower} resembles the standard contrastive learning formulation,
combining this (more general) objective with existing contrastive learning techniques
may lead to another highly scalable unsupervised RL method, which we leave for future work.

\section{Additional Results}
\label{sec:add_results}

\begin{figure}[t!]
    \centering
    \includegraphics[width=\linewidth]{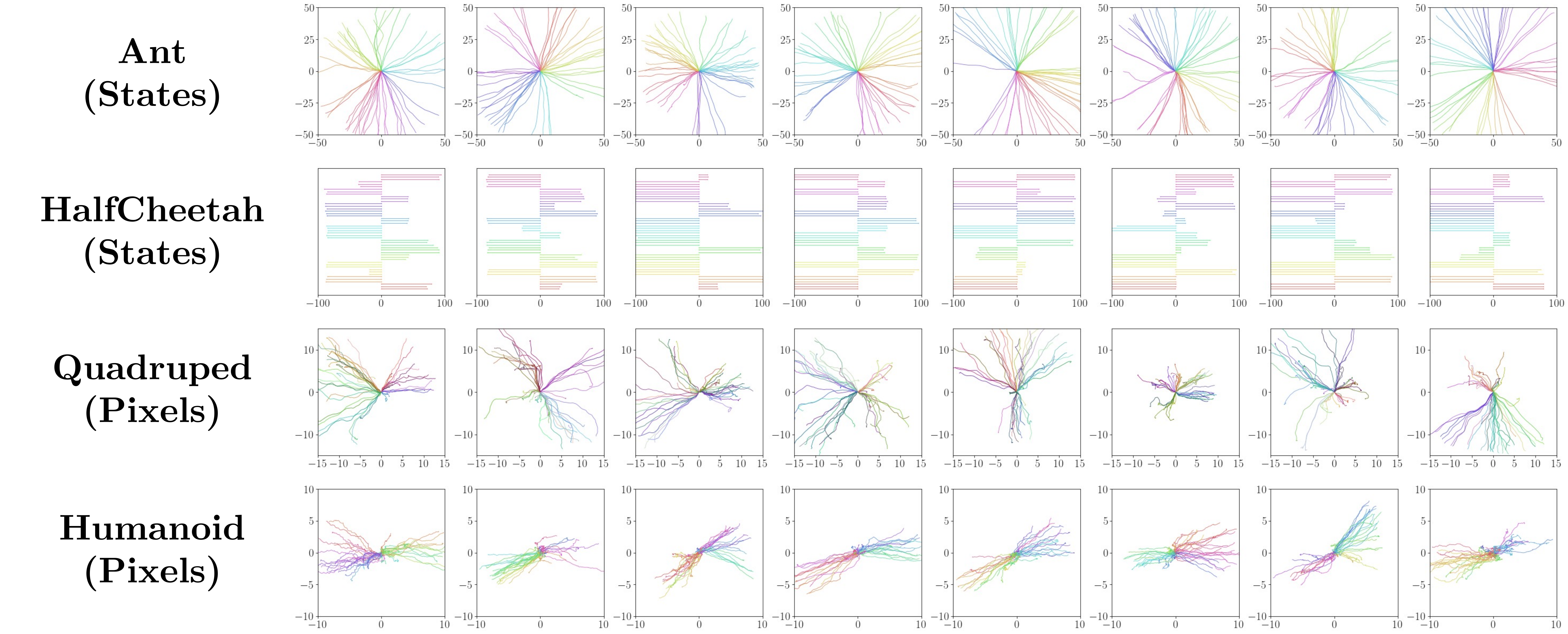}
    \caption{
    \footnotesize 
    \textbf{Full qualitative results of METRA ($\mathbf{8}$ seeds).}
    METRA learns diverse locomotion behaviors regardless of the random seed.
    }
    \label{fig:qual_all}
\end{figure}

\begin{figure}[t!]
    \centering
    \includegraphics[width=\linewidth]{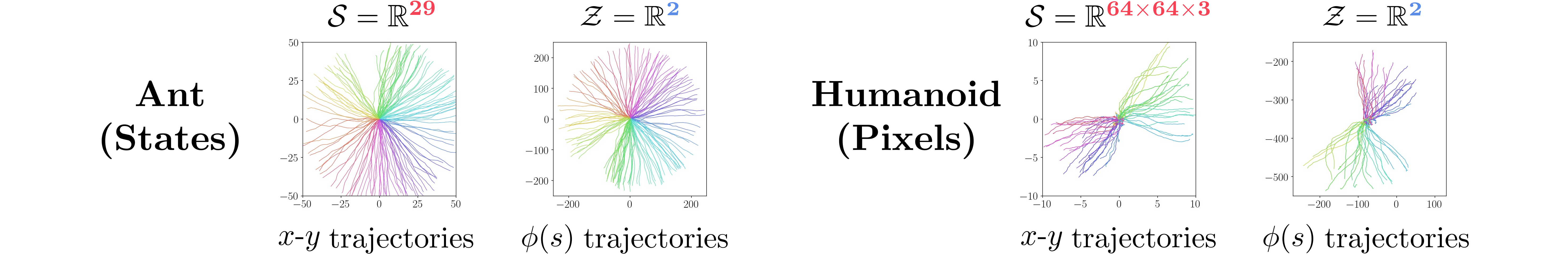}
    \caption{
    \footnotesize 
    \textbf{Latent space visualization.}
    METRA learns to capture $x$-$y$ coordinates in two-dimensional latent spaces
    in both state-based Ant and pixel-based Humanoid,
    as they are the most temporally spread-out dimensions in the state space.
    We note that, with a higher-dimensional latent space (especially when $\gZ$ is discrete),
    METRA not only learns locomotion skills but also captures more diverse behaviors,
    as shown in the Cheetah and Kitchen videos on \metrapage.
    }
    \label{fig:phi}
\end{figure}

\begin{figure}[t!]
    \centering
    \includegraphics[width=\linewidth]{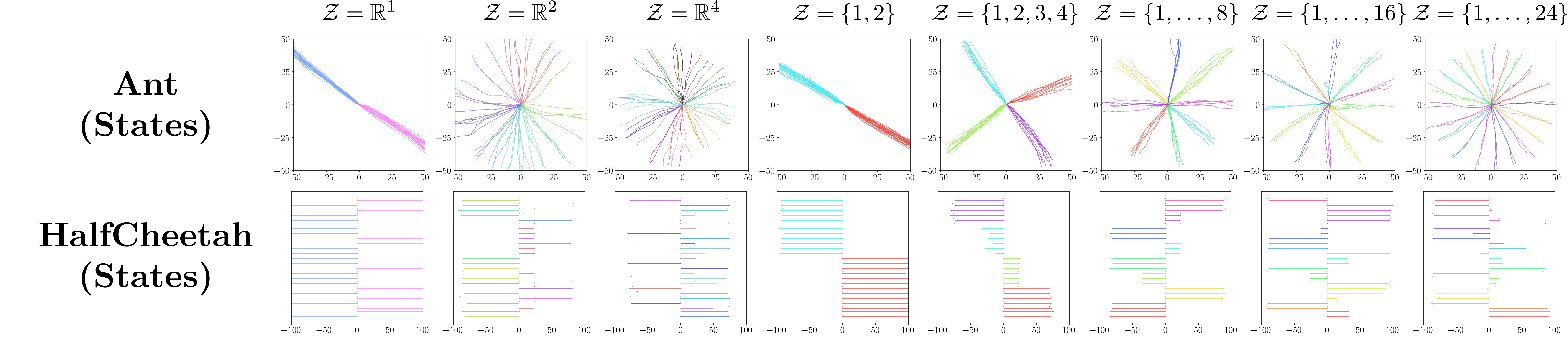}
    \caption{
    \footnotesize 
    \textbf{Skills learned with different latent space sizes.}
    Since METRA maximizes state coverage under the capacity of the latent space $\gZ$,
    skills become more diverse as the capacity of $\gZ$ grows.
    }
    \label{fig:qual_abl}
\end{figure}

\subsection{Full qualitative results}
\Cref{fig:qual_all} shows the complete qualitative results of behaviors discovered by METRA
on state-based Ant and HalfCheetah, and pixel-based Quadruped and Humanoid ($8$ seeds for each environment).
We use $2$-D skills for Ant and Humanoid, $4$-D skills for Quadruped, and $16$ discrete skills for HalfCheetah.
The full qualitative results suggest that METRA discovers diverse locomotion behaviors regardless of the random seed.

\subsection{Latent space visualization}
METRA simultaneously learns both the skill policy $\pi(a|s, z)$ and the representation function $\phi(s)$,
to find the most ``temporally spread-out'' manifold in the state space.
We train METRA on state-based Ant and pixel-based Humanoid with $2$-D continuous latent spaces $\gZ$,
and visualize the learned latent space by plotting $\phi(s)$ trajectories in \Cref{fig:phi}.
Since the $x$-$y$ plane corresponds to the most temporally ``important'' manifold in both environments,
METRA learns to capture the $x$-$y$ coordinates in two-dimensional $\phi$,
regardless of the input representations (note that Humanoid is pixel-based).
We also note that, with a higher-dimensional latent space (especially when $\gZ$ is discrete),
METRA not only learns locomotion skills but also captures more diverse, non-linear behaviors,
as shown in the Cheetah and Kitchen videos on \metrapage.

\subsection{Ablation Study of Latent Space Sizes}
To demonstrate how the size of the latent space $\gZ$ affects skill learning,
we train METRA with $1$-D, $2$-D, and $4$-D continuous skills and $2$, $4$, $8$, $16$, and $24$ discrete skills
on Ant and HalfCheetah.
\Cref{fig:qual_abl} compares skills learned with different latent space sizes,
which suggests that the diversity of skill generally increases as the capacity of $\gZ$ grows.

\section{Experimental Details}
\label{sec:exp_detail}

We implement METRA on top of the publicly available LSD codebase~\citep{lsd_park2022}.
Our implementation is available at \metracode.
For unsupervised skill discovery methods, we implement LSD~\citep{lsd_park2022}, CIC~\citep{cic_laskin2022}, DIAYN~\citep{diayn_eysenbach2019},
and DADS~\citep{dads_sharma2020} on the same codebase as METRA.
For six exploration methods,
ICM~\citep{icm_pathak2017}, LBS~\citep{lbs_mazzaglia2022}, RND~\citep{rnd_burda2019}, APT~\citep{apt_liu2021},
APS~\citep{aps_liu2021}, and Plan2Explore~\citep{p2e_sekar2020} (or Disagremeent~\citep{disag_pathak2019}),
we use the original implementations by \citet{urlb_laskin2021} for state-based environments
and the Dreamer~\citep{dreamer_hafner2020} variants by \mbox{\citet{murlb_rajeswar2023}} for pixel-based environments.
For LEXA~\citep{lexa_mendonca2021} in \Cref{sec:quant}, we use the original implementation by \citet{lexa_mendonca2021}.
We run our experiments on an internal cluster consisting of A5000 GPUs.
Each run in \Cref{sec:quant} takes no more than 24 hours.

\subsection{Environments}

\textbf{Benchmark environments.}
For state-based environments,
we use the same MuJoCo HalfCheetah and Ant environments~\citep{mujoco_todorov2012,openaigym_brockman2016} as
previous work~\citep{dads_sharma2020,lsd_park2022,csd_park2023}.
HalfCheetah has an $18$-dimensional state space and Ant has a $29$-dimensional state space.
For pixel-based environments,
we use pixel-based Quadruped and Humanoid from the DeepMind Control Suite~\citep{dmc_tassa2018}
and a pixel-based version of Kitchen by \citet{ril_gupta2019,lexa_mendonca2021}.
In DMC locomotion environments,
we use gradient-colored floors to allow the agent to infer its location from pixels, similarly to \citet{director_hafner2022,hiql_park2023}.
In Kitchen, we use the same camera setting as LEXA~\citep{lexa_mendonca2021}.
Pixel-based environments have an observation space of $64 \times 64 \times 3$,
and we do not use any proprioceptive state information.
The episode length is $200$ for Ant and HalfCheetah,
$400$ for Quadruped and Humanoid,
and $50$ for Kitchen.
We use an action repeat of $2$ for pixel-based Quadruped and Humanoid,
following \citet{lexa_mendonca2021}.
In our experiments, we do not use any prior knowledge or supervision,
such as the $x$-$y$ prior~\citep{diayn_eysenbach2019,dads_sharma2020},
or early termination~\citep{lsd_park2022}.

\textbf{Metrics.}
For the state coverage metric in locomotion environments,
we count the number of $1 \times 1$-sized $x$-$y$ bins (Ant, Quadruped, and Humanoid) or $1$-sized $x$ bins (HalfCheetah)
that are occupied by any of the target trajectories.
In Kitchen, we count the number of pre-defined tasks achieved by any of the target trajectories,
where we use the same six pre-defined tasks as \citet{lexa_mendonca2021}:
Kettle (K), Microwave (M), Light Switch (LS), Hinge Cabinet (HC), Slide Cabinet (SC),
and Bottom Burner (BB).
Each of the three types of coverage metrics,
policy state coverage~(\Cref{fig:main_skill,fig:main_exp}),
queue state coverage~(\Cref{fig:main_exp}),
and total state coverage~(\Cref{fig:main_exp}),
uses different target trajectories.
Policy state coverage, which is mainly for skill discovery methods, is computed by $48$ deterministic trajectories with $48$ randomly sample skills at the current epoch.
Queue state coverage is computed by the most recent $100000$ training trajectories up to the current epoch.
Total state coverage is computed by the entire training trajectories up to the current epoch.

\textbf{Downstream tasks.}
For quantitative comparison of skill discovery methods (\Cref{fig:down}), we use five downstream tasks,
AntMultiGoals, HalfCheetahGoal, HalfCheetahHurdle, QuadrupedGoal, and HumanoidGoal,
mostly following the prior work~\citep{lsd_park2022}.
In HalfCheetahGoal, QuadrupedGoal, and HumanoidGoal,
the task is to reach a target goal (within a radius of $3$)
randomly sampled from $[-100, 100]$, $[-7.5, 7.5]^2$, and $[-5, 5]^2$,
respectively.
The agent receives a reward of $10$ when it reaches the goal.
In AntMultiGoals,
the task is to reach four target goals (within a radius of $3$),
where each goal is randomly sampled from $[s_x-7.5, s_x+7.5] \times [s_y-7.5, s_y+7.5]$,
where $(s_x, s_y)$ is the agent's current $x$-$y$ position.
The agent receives a reward of $2.5$ whenever it reaches the goal.
A new goal is sampled when the agent either reaches the previous goal or fails to reach it within $50$ steps.
In HalfCheetahHurdle~\citep{hurdle_qureshi2020},
the task is to jump over multiple hurdles.
The agent receives a reward of $1$ whenever it jumps over a hurdle.
The episode length is $200$ for state-based environments and $400$ for pixel-based environments.

For quantitative comparison with LEXA (\Cref{fig:main_goal}), we use five goal-conditioned tasks.
In locomotion environments, goals are randomly sampled from $[-100, 100]$ (HalfCheetah),
$[-50, 50]^2$ (Ant), $[-15, 15]^2$ (Quadruped), or $[-10, 10]^2$ (Humanoid).
We provide the full state as a goal $g$,
whose dimensionality is $18$ for HalfCheetah, $29$ for Ant, and $64 \times 64 \times 3$ for pixel-based Quadruped and Humanoid.
In Kitchen, we use the same six (single-task) goal images and tasks as \citet{lexa_mendonca2021}.
We measure the distance between the goal and the final state in locomotion environments
and the number of successful tasks in Kitchen.

\subsection{Implementation Details}

\textbf{Unsupervised skill discovery methods.}
For skill discovery methods, we use $2$-D continuous skills for Ant and Humanoid,
$4$-D continuous skills for Quadruped,
$16$ discrete skills for HalfCheetah,
and $24$ discrete skills for Kitchen,
where continuous skills are sampled from the standard Gaussian distribution,
and discrete skills are uniformly sampled from the set of zero-centered one-hot vectors~\citep{lsd_park2022}.
METRA and LSD use normalized vectors (\ie, $z / \|z\|_2$) for continuous skills,
as their objectives are invariant to the magnitude of $z$.
For CIC, we use $64$-D continuous skills for all environments, following the original suggestion~\citep{cic_laskin2022},
and we found that using $64$-D skills for CIC leads to better state coverage than using $2$-D or $4$-D skills.
We present the full list of hyperparameters used for skill discovery methods in \Cref{table:hyp_usd}.

\textbf{Unsupervised exploration methods.}
For unsupervised exploration methods and LEXA,
we use the original implementations and hyperparameters~\citep{urlb_laskin2021,lexa_mendonca2021,murlb_rajeswar2023}.
For LEXA's goal-conditioned policy (achiever),
we test both the temporal distance and cosine distance variants and use the former as it leads to better performance.

\begin{table}[t]
    \caption{Hyperparameters for unsupervised skill discovery methods.}
    \label{table:hyp_usd}
    \begin{center}
    \begin{tabular}{lc}
        \toprule
        Hyperparameter & Value \\
        \midrule
        Learning rate & $0.0001$ \\
        Optimizer & Adam~\citep{adam_kingma2015} \\
        \# episodes per epoch & $8$ \\
        \# gradient steps per epoch & \makecell{$200$ (Quadruped, Humanoid), \\ $100$ (Kitchen), $50$ (Ant, HalfCheetah)} \\
        Minibatch size & $256$ \\
        Discount factor $\gamma$ & $0.99$ \\
        Replay buffer size & \makecell{$10^6$ (Ant, HalfCheetah), $10^5$ (Kitchen), \\ $3 \times 10^5$ (Quadruped, Humanoid)} \\
        Encoder & CNN~\citep{cnn_lecun1989} \\
        \# hidden layers & $2$ \\
        \# hidden units per layer & $1024$ \\
        Target network smoothing coefficient & $0.995$ \\
        Entropy coefficient & \makecell{0.01 (Kitchen), \\ auto-adjust~\citep{saces_haarnoja2018} (others)} \\
        METRA $\eps$ & $10^{-3}$ \\
        METRA initial $\lambda$ & $30$ \\
        \bottomrule
    \end{tabular}
    \end{center}
\end{table}

\begin{table}[t]
    \caption{Hyperparameters for PPO high-level controllers.}
    \label{table:hyp_high}
    \begin{center}
    \begin{tabular}{lc}
        \toprule
        Hyperparameter & Value \\
        \midrule
        \# episodes per epoch & $64$ \\
        \# gradient steps per episode & $10$ \\
        Minibatch size & $256$ \\
        Entropy coefficient & $0.01$ \\
        GAE $\lambda$~\citep{gae_schulman2016} & $0.95$ \\
        PPO clip threshold $\epsilon$ & $0.2$ \\
        \bottomrule
    \end{tabular}
    \end{center}
\end{table}

\textbf{High-level controllers for downstream tasks.}
In \Cref{fig:down}, we evaluate learned skills on downstream tasks
by training a high-level controller $\pi^h(z|s, s^\mathrm{task})$ that selects a skill
every $K=25$ (Ant and HalfCheetah) or $K=50$ (Quadruped and Humanoid) environment steps,
where $s^\mathrm{task}$ denotes the task-specific information:
the goal position (`-Goal' or `-MultiGoals' tasks)
or the next hurdle position and distance (HalfCheetahHurdle).
At every $K$ steps, the high-level policy selects a skill $z$,
and then the pre-trained low-level skill policy $\pi(a|s, z)$ executes the same $z$ for $K$ steps.
We train high-level controllers with PPO~\citep{ppo_schulman2017} for discrete skills
and SAC~\citep{sac_haarnoja2018} for continuous skills.
For SAC, we use the same hyperparameters as unsupervised skill discovery methods (\Cref{table:hyp_usd}),
and we present the full list of PPO-specific hyperparameters in \Cref{table:hyp_high}.

\textbf{Zero-shot goal-conditioned RL.}
In \Cref{fig:main_goal},
we evaluate the zero-shot performances of METRA, LSD, DIAYN, and LEXA on goal-conditioned downstream tasks.
METRA and LSD use the procedure described in \Cref{sec:full_obj} to select skills.
We re-compute $z$ every step for locomotion environments,
but in Kitchen, we use the same $z$ selected at the first step throughout the episode,
as we find that this leads to better performance.
DIAYN chooses $z$ based on the output of the skill discriminator at the goal state (\ie, $q(z|g)$, where $q$ denotes the skill discriminator of DIAYN).
LEXA uses the goal-conditioned policy (achiever), $\pi(a|s, g)$.

\end{document}